\documentclass{article}

\usepackage{arxiv}

\usepackage[utf8]{inputenc}
\usepackage[T1]{fontenc}

\usepackage{graphicx}%
\usepackage{amsmath,amssymb,amsfonts}%
\usepackage{amsthm}%
\usepackage{mathrsfs}%
\usepackage[title]{appendix}%
\usepackage{xcolor}%
\usepackage{textcomp}%
\usepackage{hyperref}%
\usepackage{booktabs}%
\usepackage{algorithm}%
\usepackage{algorithmicx}%
\usepackage{algpseudocode}%
\usepackage{bm}%
\usepackage{tikz}%
\usetikzlibrary{positioning, arrows.meta, shapes.geometric, calc, decorations.pathreplacing}%

\theoremstyle{plain}%
\newtheorem{theorem}{Theorem}%
\newtheorem{proposition}[theorem]{Proposition}%

\theoremstyle{remark}%
\newtheorem{remark}{Remark}%

\theoremstyle{definition}%
\newtheorem{assumption}{Assumption}%

\def\WB{\mathrm{WB}}
\def\BB{\mathrm{BB}}
\newcommand{\bff}{{\bf f}}

\renewcommand{\shorttitle}{Exploiting Separability in Multi-Scale Grey-Box BO}

\title{Exploiting Separability in Multi-Scale Grey-Box Bayesian Optimization}

\author{Joshua E.\ Hammond$^{1}$ \and Tyler A.\ Soderstrom$^{2}$ \and Brian A.\ Korgel$^{1,3}$ \and Michael Baldea$^{1,4}$\thanks{Corresponding author: mbaldea@che.utexas.edu}}

\hypersetup{
pdftitle={Exploiting Separability in Multi-Scale Grey-Box Bayesian Optimization},
pdfauthor={Joshua E. Hammond, Tyler A. Soderstrom, Brian A. Korgel, Michael Baldea},
pdfkeywords={Bayesian optimization, Grey-box optimization, Bilevel optimization, Surrogate-based optimization, Multi-scale design, Dimensionality reduction},
}

\date{}

\begin{document}

\maketitle

\noindent$^{1}$McKetta Department of Chemical Engineering, The University of Texas at Austin, 200 E. Dean Keaton St. Stop C0400, Austin, Texas 78712, USA\\
$^{2}$ExxonMobil Technology and Engineering, 22777 Springwoods Village Pkwy, Spring, Texas 77389, USA\\
$^{3}$Energy Institute, The University of Texas at Austin, 2304 Whitis Ave.\ Stop C2400, Austin, Texas 78712, USA\\
$^{4}$Institute for Computational Engineering and Sciences, The University of Texas at Austin, 201 E.\ 24th Street, POB 4.102, Stop C0200, Austin, Texas 78712, USA

\bigskip

\begin{abstract}
We consider grey-box optimization problems where the decision variables naturally partition into black-box variables $x^{\BB}$ (as arguments to an expensive black-box function) and white-box variables $x^{\WB}$, governed by a set of explicit, closed-form equations that also depend on the output of the black-box function. We exploit this separability through a bilevel reformulation: an outer Bayesian optimization (BO) to optimize the scalar objective as a function of $x^{\BB}$ alone, while an inner problem solves the white-box subproblem via global optimization. The Gaussian process surrogate used in BO is therefore defined $\mathbb{R}^{n_{\BB}}$ rather than $\mathbb{R}^{n_{\WB}+n_{\BB}}$, and white-box constraints are satisfied exactly whenever the inner optimizer converges to a feasible point---without penalty functions, chance constraints, or moment approximations. On a suite of 13 benchmark problems, bilevel BO achieves lower regret, with fewer iterations and wall clock time. This advantage is robust to initialization set size, exploration parameters, and inner-solver choice.
\end{abstract}

\keywords{Bayesian optimization \and Grey-box optimization \and Bilevel optimization \and Surrogate-based optimization \and Multi-scale design \and Dimensionality reduction}

\section{Introduction}\label{sec:introduction}

Surrogate-based optimization is the standard approach for expensive black-box functions~\cite{frazier2018tutorial}, yet many problems contain known, differentiable substructure that monolithic surrogates waste samples learning. When a model can be readily separated into an ``expensive'' black-box component and a ``cheap'' white-box component, building a single surrogate over all variables forces the surrogate model to learn both the unknown and the known components. The cost of this redundancy grows with dimension: as the size of the combined variable space $[x^{\WB}, x^{\BB}]$ expands, the curse of dimensionality burdens the surrogate for structure it should never have had to discover.

This situation arises naturally in multi-scale engineering problems where design and operation are jointly optimized. For example, a catalyst designer uses density functional theory (DFT) simulations to predict kinetic parameters, which are then used in well-understood reactor mass and energy balances to evaluate reaction yield. Similarly, a polymer engineer uses molecular simulations to estimate membrane transport properties, then solves the solution-diffusion equations to size a separation unit. In each case, the overall objective depends on two groups of decision variables. \emph{Black-box variables} $x^{\BB}$ are inputs to expensive simulations or experiments. Conversely, \emph{white-box variables} $x^{\WB}$ are variables present in the closed-form macroscopic equation. The black-box and white-box elements of the model are \emph{separable}: the expensive computation depends only on $x^{\BB}$, producing outputs $y = f^{\BB}(x^{\BB})$ that parameterize the white-box model. The key structural property is that, for any fixed $x^{\BB}$ and $y$, the remaining optimization over $x^{\WB}$ is a standard NLP solvable by conventional methods assuming that the $\WB$ equations are continuous and that $x^{\WB}$ are all real.

Several lines of work exploit known structure in expensive optimization, ranging from deterministic grey-box solvers~\cite{boukouvala2017greybox_jgo,boukouvala2017argonaut,eason2016trf} to composite and structured Bayesian optimization~\cite{astudillo2019bocf,paulson2022cobalt,kudva2025bonsai} to data-driven bilevel methods~\cite{beykal2020domino,kieffer2017bilevel}. We review these in Section~\ref{sec:related_work}. The common gap is that no prior method combines a surrogate-based global search over only the black-box variables with exact NLP solution of the white-box subproblem---a configuration that fully exploits variable separability for both dimensionality reduction and exact constraint satisfaction.

We exploit this separability through a bilevel reformulation: the outer loop uses Bayesian optimization (BO) to search over $x^{\BB}$ alone, while an inner optimizer solves the white-box subproblem exactly for each candidate. The Gaussian process (GP) surrogate therefore operates over $\mathbb{R}^{n_{\BB}}$ rather than $\mathbb{R}^{n_{\WB}+n_{\BB}}$, and white-box constraints are satisfied exactly whenever the inner optimizer converges to a feasible point. The details of this reformulation are presented in Section~\ref{sec:method}.

We make three contributions:
\begin{enumerate}
    \item A bilevel reformulation of separable grey-box problems that reduces surrogate dimensionality from $n_{\WB}+n_{\BB}$ to $n_{\BB}$ by solving the white-box subproblem via global optimization, with white-box constraints satisfied exactly whenever the inner optimizer converges to a feasible point---without penalty functions, chance constraints, or moment approximations.
    \item A benchmark suite of 13 separable grey-box problems (2--5 total variables, 0--3 constraints) considering synthetic test functions and prototype engineering applications---the largest such suite for this problem class.
    \item Comprehensive empirical evidence from 8{,}450 independent optimization runs (8{,}190 BO + 130 NLP + 130 BH): the proposed bilevel strategy achieves 11$\times$--$10^{8}\times$ lower regret than black-box BO across all 13 problems, with equal or faster wall time and robustness to hyperparameters ($n_\text{init}$, $\xi$) and inner-solver choice.
\end{enumerate}

%% ============================================================================
%% SECTION 2: PROBLEM FORMULATION (was Section 2)
%% ============================================================================
\section{Problem Formulation}\label{sec:problem_formulation}

We consider optimization problems in which an expensive black-box model is coupled with known, differentiable equations. This setting arises in multi-scale engineering design---molecular simulations feeding into process models---but also in any domain where part of the system is analytically tractable and part is not. We first formalize the general problem, then identify the structural property our method exploits.

\subsection{Problem Setting and Notation}\label{subsec:problem_setting}

The decision variables comprise two groups:
\begin{itemize}
    \item $x^{\WB} \in \mathbb{R}^{n_{\WB}}$: \emph{white-box variables} governed by known, differentiable equations (e.g., temperatures, pressures, flow rates in engineering; policy parameters in control; design dimensions in structural optimization). These variables enter the white-box model $f^{\WB}$, the objective $J$, and the inequality constraints $g$.
    \item $x^{\BB} \in \mathbb{R}^{n_{\BB}}$: \emph{black-box variables} that parameterize an expensive, non-differentiable function $f^{\BB}$ (e.g., molecular descriptors requiring DFT, material properties from simulations, hyperparameters of a costly simulator). These variables enter $f^{\BB}$ directly and may also appear in $J$ and $g$.
\end{itemize}

The black-box model
\begin{equation}\label{eq:molecular_model}
    y = f^{\BB}(x^{\BB}), \quad f^{\BB}: \mathbb{R}^{n_{\BB}} \to \mathbb{R}^{n_y},
\end{equation}
maps $x^{\BB}$ to a vector of intermediate parameters $y \in \mathbb{R}^{n_y}$ that couple the black-box and white-box components. We assume $f^{\BB}$ is expensive to evaluate and that we lack closed-form expressions or derivative information for it. Note that $y$ refers exclusively to the outputs of the black-box function; the white-box variables $x^{\WB}$ are not ``outputs'' in this sense but rather decision variables whose optimal values are determined by solving the white-box subproblem for a given $y$.

The white-box model takes the implicit form
\begin{equation}\label{eq:process_model}
    f^{\WB}(x^{\WB}, y) = 0, \quad f^{\WB}: \mathbb{R}^{n_{\WB}} \times \mathbb{R}^{n_y} \to \mathbb{R}^{n_{\WB}},
\end{equation}
encoding known equations (e.g., mass and energy balances) that relate the white-box variables $x^{\WB}$ to the black-box outputs $y$. Although $y$ may appear as a parameter in the white-box equations, it is fixed for any given $x^{\BB}$; the white-box optimization searches only over $x^{\WB}$. We assume $f^{\WB}$ is sufficiently smooth and that the Jacobian $\partial f^{\WB}/\partial x^{\WB}$ is available analytically.

The inequality constraints
\begin{equation}\label{eq:g_definition}
    g(x^{\WB}, y, x^{\BB}) \leq 0, \quad g: \mathbb{R}^{n_{\WB}} \times \mathbb{R}^{n_y} \times \mathbb{R}^{n_{\BB}} \to \mathbb{R}^{n_g},
\end{equation}
encode $n_g$ design specifications, safety limits, or feasibility requirements that may depend on both variable groups and the black-box outputs.

The complete optimization problem over both variable groups is
\begin{subequations}\label{eq:full_problem}
\begin{align}
    \min\limits_{x^{\WB}, x^{\BB}} \quad & J(x^{\WB}, y, x^{\BB}) \label{eq:objective}\\
    \text{s.t.} \quad & f^{\WB}(x^{\WB}, y) = 0 \label{eq:process_constraint}\\
    & y = f^{\BB}(x^{\BB}) \label{eq:material_constraint}\\
    & g(x^{\WB}, y, x^{\BB}) \leq 0 \label{eq:inequality_constraints}\\
    & x^{\WB} \in \mathcal{X}^{\WB}, \quad x^{\BB} \in \mathcal{X}^{\BB} \label{eq:bounds}
\end{align}
\end{subequations}

The objective $J(x^{\WB},\allowbreak y,\allowbreak x^{\BB})$ and inequality constraints $g(x^{\WB},\allowbreak y,\allowbreak x^{\BB}) \leq 0$ may depend on the white-box variables, the black-box outputs, and---in some problems---directly on the black-box variables $x^{\BB}$.

%% ============================================================================
%% SECTION 3: RELATED WORK
%% ============================================================================
\section{Related Work}\label{sec:related_work}

Problem~\eqref{eq:full_problem} sits at the intersection of surrogate-based global optimization, grey-box Bayesian optimization, bilevel programming, and constrained expensive optimization. We organize prior work along two axes that follow directly from Assumption~\ref{ass:separability}: (i) whether the method distinguishes $x^{\BB}$ from $x^{\WB}$, and (ii) whether the white-box subsystem $f^{\WB}$ and constraints $g$ are exploited exactly or approximated. Methods that collapse along axis (i) pay the curse of dimensionality in $n_{\WB}+n_{\BB}$; methods that collapse along axis (ii) trade exactness for the generality of uncertainty propagation.

\subsection{Surrogate-Based Global Optimization}

The use of surrogate models to guide expensive optimization traces to the Efficient Global Optimization (EGO) algorithm of Jones et al.~\cite{jones1998efficient}, which introduced Expected Improvement (EI) as an acquisition function for GP surrogates. EGO treats the objective as a monolithic black box and builds a single GP over the full decision space $\mathcal{X}^{\WB}\times\mathcal{X}^{\BB}\subset\mathbb{R}^{n_{\WB}+n_{\BB}}$ which is the default in most BO implementations~\cite{shahriari2016bo_survey,frazier2018tutorial}.

Within the deterministic surrogate realm, Boukouvala, Hasan, and Floudas~\cite{boukouvala2017greybox_jgo} developed a methodology for constrained grey-box problems that selects surrogates from a diverse set of functional forms and globally optimizes the resulting NLP. The companion ARGONAUT framework~\cite{boukouvala2017argonaut} formalized this into an iterative algorithm with variable selection, bounds tightening, and constrained sampling for problems with up to 100 variables; Kieslich et al.~\cite{kieslich2018smolyak} extended the approach using Smolyak grids and polynomial surrogates. These methods share our philosophy of exploiting known equations, but they build surrogates over the joint $(x^{\WB}, x^{\BB})$ space rather than restricting the surrogate to $x^{\BB}$ as Assumption~\ref{ass:separability} permits; $f^{\WB}$ is used only to generate training data or to tighten bounds on $\mathcal{X}^{\WB}$, not to eliminate $x^{\WB}$ from the surrogate's domain. Moreover, the deterministic surrogates they employ lack the principled exploration--exploitation trade-off of probabilistic models and tend to overexploit early samples under tight evaluation budgets.

The trust-region filter algorithms of Eason and Biegler~\cite{eason2016trf,eason2018trf} offer a complementary local approach: they embed surrogates for $f^{\BB}$ within a trust region and solve the white-box NLP in $x^{\WB}$ at each iteration---architecturally close to the inner subproblem in Assumption~\ref{ass:separability}(ii), but with only local convergence guarantees in $x^{\BB}$ rather than a global exploration mechanism.

\subsection{Grey-Box Bayesian Optimization}

Grey-box BO methods incorporate known structure into the surrogate to reduce its modeling burden. Astudillo and Frazier~\cite{astudillo2019bocf} introduced Bayesian Optimization of Composite Functions (BOCF), which in our notation models $J(x^{\WB}, f^{\BB}(x^{\BB}), x^{\BB})$ by training a multi-output GP on $f^{\BB}$ and propagating uncertainty through $J$ via sampling; later extensions handle function networks~\cite{astudillo2021bofn} and partial evaluations~\cite{buathong2024partial}. Paulson and Lu~\cite{paulson2022cobalt} extend this to constrained grey-box problems by combining multivariate GP models with a constrained expected utility function, using sample average approximation and chance constraints to propagate GP uncertainty through known equations. Gonz\'{a}lez and Zavala~\cite{gonzalez2024bois} propose an interconnected-systems BO framework with adaptive linearization, Winz et al.~\cite{winz2025greybox} modify the upper confidence bound to exploit grey-box derivative information, and Kudva and Paulson~\cite{kudva2025bonsai} exploit function-network topology for robust optimization.

All of these methods place a surrogate on the intermediate outputs $y$ defined in Equation~\eqref{eq:molecular_model} and propagate uncertainty through Equation~\eqref{eq:process_constraint}, requiring multi-output GPs and moment approximations or sampling. When Assumption~\ref{ass:separability}(ii) holds---i.e., the white-box subproblem is solvable to global optimality---using a surrogate model for $y$ adds modeling complexity without clear benefit. We instead place a scalar GP on the value function
\begin{equation}\label{eq:value_function}
    x^{\BB} \;\mapsto\; J\bigl(x^{\WB\star}(x^{\BB}),\; f^{\BB}(x^{\BB}),\; x^{\BB}\bigr)
\end{equation}
in $\mathbb{R}^{n_{\BB}}$, avoiding both multi-output surrogates and uncertainty propagation (Table~\ref{tab:method_comparison}).

\subsection{Bilevel Optimization}

Bilevel programming has a rich history in the global optimization literature. G\"{u}m\"{u}\c{s} and Floudas~\cite{gumus2001bilevel} developed the first rigorous global optimization approach for nonlinear bilevel problems using the $\alpha$BB framework. Mitsos, Chachuat, and Barton~\cite{mitsos2009bilevel_dynamic} extended this to bilevel dynamic optimization, Kleniati and Adjiman~\cite{kleniati2014bilevel} addressed the mixed-integer case, and Fa{\'\i}sca et al.~\cite{faisca2007bilevel} proposed parametric global optimization for bilevel programs via multi-parametric programming. In the language of Section~\ref{sec:problem_formulation}, these methods treat $x^{\BB}$ as the upper-level decision and $x^{\WB}$ as the lower-level decision, but assume analytical access to both $f^{\BB}$ and $f^{\WB}$---an assumption that breaks when $f^{\BB}$ is an expensive black-box simulation.

Data-driven approaches address this gap. The DOMINO framework of Beykal et al.~\cite{beykal2020domino} solves bilevel mixed-integer nonlinear problems by sampling the upper-level objective and solving the lower-level problem to global optimality at each sample point, using a grey-box optimization solver (ARGONAUT/NOMAD) for the upper-level search. DOMINO is the closest non-BO predecessor to our approach: it solves the $x^{\WB}$ subproblem to global optimality at each sampled $x^{\BB}$, matching Assumption~\ref{ass:separability}(ii), but uses deterministic surrogates over $x^{\BB}$. We instead use a GP with a principled acquisition function that balances exploration and exploitation---an advantage when the evaluation budget is small ($<$250 calls) and the risk of overexploiting early samples is high.

Within the BO literature, Kieffer et al.~\cite{kieffer2017bilevel} first applied BO to bilevel problems, using EI at the outer level and Sequential Least Squares Programming (SLSQP) at the inner level. This nested architecture matches our outer-over-$x^{\BB}$/inner-over-$x^{\WB}$ decomposition, but Kieffer et al.\ did not exploit the separability of Assumption~\ref{ass:separability} to reduce surrogate dimensionality from $n_{\WB}+n_{\BB}$ to $n_{\BB}$. Recent work addresses the harder setting where \emph{both} levels are black boxes: Ekmekcioglu et al.~\cite{ekmekcioglu2024bilevel} model both levels as GPs over the joint space; Chew et al.~\cite{chew2025bilbo} construct confidence-bound trusted sets with regret guarantees; Fu et al.~\cite{fu2024convergence} provide convergence analysis for BO bilevel problems where the inner loop is unconstrained Stochastic Gradient Descent (SGD). When the inner problem is a known white-box NLP, solving it exactly is both cheaper and more accurate than learning a second surrogate.

Baldea~\cite{baldea2026multiscale} introduced the multi-scale bilevel BO framework we extend here, demonstrating it on a single chemical process case study. We advance that work with a 13-problem benchmark suite, with systematic hyperparameter sweeps, and detailed comparisons quantifying when and why bilevel decomposition helps.

\subsection{Constraint Handling in Expensive Optimization}

Standard constrained BO methods treat each component of $g(x^{\WB},\allowbreak y,\allowbreak x^{\BB})$ as an independent expensive black box with its own GP. Gardner et al.~\cite{gardner2014} introduced weighted expected improvement for unknown constraints; Gelbart et al.~\cite{gelbart2014unknown} extended this with probability-of-feasibility weighting; Picheny et al.~\cite{picheny2016albo} proposed a slack-variable augmented Lagrangian approach. These methods are appropriate when constraint functions are truly unknown, but when components of $g$ depend on $x^{\WB}$ and $y$ through known equations---mass balances, safety envelopes, thermodynamic limits---creating surrogates increases computational cost and introduces approximation error.

Our inner global optimizer enforces the white-box components of $g$ exactly via Assumption~\ref{ass:separability}(ii), avoiding penalty functions, chance constraints, and moment approximations. The advantage is largest when feasible regions are tight: in such cases(reflected in the Distillation and Heat-Exchanger problems in Section~\ref{sec:experiments}), bilevel BO achieves $>$100{,}000$\times$ lower regret than penalty-based black-box BO.

%% ============================================================================
%% SECTION 4: METHOD
%% ============================================================================
\section{Method: Bilevel Bayesian Optimization}\label{sec:method}

Problem~\eqref{eq:full_problem} possesses a structural property that we formalize as an assumption and then exploit algorithmically.

\begin{assumption}[Separable grey-box structure]\label{ass:separability}
Problem~\eqref{eq:full_problem} satisfies:
\begin{enumerate}
    \item[(i)] The black-box function $f^{\BB}$ depends only on $x^{\BB}$. For any fixed $x^{\BB}$ and $y = f^{\BB}(x^{\BB})$, the resulting optimization over $x^{\WB}$ alone is a well-posed NLP. %(The objective $J$ and constraints $g$ may depend on $x^{\BB}$ directly, not only through $y$; the separability is in the \emph{optimization structure}---the inner problem optimizes $x^{\WB}$ while $x^{\BB}$ and $y$ are held fixed.)
    \item[(ii)] For any fixed $x^{\BB}$ and $y = f^{\BB}(x^{\BB})$, the white-box subproblem
    \begin{subequations}\label{eq:inner_subproblem}
    \begin{align}
        \min\limits_{x^{\WB}} \; & J(x^{\WB},\allowbreak y,\allowbreak x^{\BB}) \label{eq:inner_sub_obj}\\
        \text{s.t.} \;\; & f^{\WB}(x^{\WB}, y) = 0 \label{eq:inner_sub_eq}\\
        & g(x^{\WB},\allowbreak y,\allowbreak x^{\BB}) \leq 0 \label{eq:inner_sub_ineq}\\
        & x^{\WB} \in \mathcal{X}^{\WB} \label{eq:inner_sub_bounds}
    \end{align}
    \end{subequations}
    is solvable to global optimality (of the subproblem) by a suitable global optimizer.
\end{enumerate}
\end{assumption}

Under Assumption~\ref{ass:separability}, the optimization can be decomposed as follows: for any fixed $x^{\BB}$, the black-box evaluation $y = f^{\BB}(x^{\BB})$ and the inner optimization over $x^{\WB}$ are self-contained. This decoupling is the structural feature we exploit: rather than optimizing all variables jointly, we can search over the low-dimensional black-box space while solving the white-box subproblem exactly at each step. Note that $f^{\BB}$ can always be augmented with identity mappings (e.g., $y_{n_y+1} = x^{\BB}_1$) to absorb any direct $x^{\BB}$ dependence in $J$ or $g$, so the distinction between ``depends on $x^{\BB}$ through $y$'' and ``depends on $x^{\BB}$ directly'' is a modeling choice, not a structural limitation.

\subsection{Bilevel Reformulation}\label{subsec:bilevel_reformulation}

Based on Assumption~\ref{ass:separability}, we reformulate~\eqref{eq:full_problem} as a bi-level optimization problem that separates what we know from what we must learn:
\begin{subequations}\label{eq:bilevel}
\begin{align}
    \min\limits_{x^{\BB}} \quad & J(x^{\WB\star}, y, x^{\BB}) \label{eq:outer_obj}\\
    \text{s.t.} \quad & y = f^{\BB}(x^{\BB}) \label{eq:outer_blackbox}\\
    & x^{\WB\star} = \arg\min\limits_{x^{\WB}} \; J(x^{\WB}, y, x^{\BB}) \label{eq:inner_obj}\\
    & \quad\quad \text{s.t.} \quad f^{\WB}(x^{\WB}, y) = 0 \label{eq:inner_process}\\
    & \quad\quad\quad\quad\; g(x^{\WB}, y, x^{\BB}) \leq 0 \label{eq:inner_ineq}\\
    & \quad\quad\quad\quad\; x^{\WB} \in \mathcal{X}^{\WB} \label{eq:inner_bounds}\\
    & x^{\BB} \in \mathcal{X}^{\BB} \label{eq:outer_bounds}
\end{align}
\end{subequations}

The \textbf{inner problem}~\eqref{eq:inner_obj}--\eqref{eq:inner_bounds} is a standard nonlinear program (NLP). Given fixed $x^{\BB}$ and $y = f^{\BB}(x^{\BB})$, it optimizes only the white-box variables $x^{\WB}$ using the white box model; $x^{\BB}$ and $y$ enter as parameters. Because the white-box equations are closed-form and inexpensive to evaluate, this subproblem can be solved to global optimality using a global optimizer such as Basin-Hopping (BH) with an SLSQP local minimizer.

The \textbf{outer problem}~\eqref{eq:outer_obj}--\eqref{eq:outer_blackbox} searches over black-box decisions $x^{\BB}$. For each candidate $x^{\BB}$, we evaluate the black box to obtain $y$, solve the inner problem to global optimality to get $x^{\WB\star}$, and record the resulting objective value $J(x^{\WB\star},\allowbreak y,\allowbreak x^{\BB})$. This outer loop is where we apply Bayesian optimization. In practice, the number of outer-loop iterations---each requiring one evaluation of $f^{\BB}$---is the primary driver of total campaign time, since each black-box evaluation may involve hours of computation or physical experimentation. The inner NLP solve adds negligible overhead by comparison (see Remark~4 and Section~\ref{sec:discussion}).

\begin{remark}
The key advantage is dimensionality reduction. The BO surrogate now operates over $\mathbb{R}^{n_{BB}}$ rather than $\mathbb{R}^{n_{WB} + n_{BB}}$. When $n_{BB} \ll n_{WB}$, the number of black-box evaluations required to build an accurate surrogate drops substantially relative to applying BO to problem~(\ref{eq:full_problem}).
\end{remark}

\begin{remark}
If the inner problem~\eqref{eq:inner_obj}--\eqref{eq:inner_bounds} is infeasible for some $y$, we return a large penalty value. The BO acquisition function will naturally steer away from black-box variable choices that lead to infeasible white-box solutions.
\end{remark}

\begin{remark}
    The bilevel decomposition remains valid because, for any fixed $x^{\BB}$ and $y = f^{\BB}(x^{\BB})$, the inner optimization is over $x^{\WB}$ alone. The sets $\mathcal{X}^{\WB}$ and $\mathcal{X}^{\BB}$ define box constraints.
\end{remark}

\begin{remark}[White-box cost]
    Calling the white-box subproblem ``inexpensive'' is relative to the black-box evaluation. In the simplest case, $f^{\WB}$ reduces to closed-form function evaluations (as in the benchmark problems of Section~\ref{sec:benchmark_suite}). More generally, the white-box model may require solving systems of nonlinear equations (e.g., iterative mass and energy balances, phase equilibrium calculations). The bilevel decomposition remains beneficial whenever the cost of solving the inner NLP is substantially less than the cost of evaluating $f^{\BB}$---a condition that holds broadly in multi-scale problems where each black-box evaluation involves hours of simulation or laboratory experimentation.
\end{remark}

% \subsection{Algorithm}\label{subsec:algorithm}
\subsection{Algorithm, Surrogate Model, and Acquisition Function}\label{subsec:algorithm}

\begin{algorithm}[htbp]
\caption{Bilevel Bayesian Optimization}\label{alg:bilevel_bo}
\begin{algorithmic}[1]
\Require Black-box bounds $\mathcal{X}^{\BB}$, white-box bounds $\mathcal{X}^{\WB}$, models $f^{\BB}$, $f^{\WB}$, $J$, $g$; initial samples $n_\text{init}$, iterations $N$, exploration parameter $\xi$
\Ensure Best $x^{\BB\star}$, $x^{\WB\star}$, $J^\star$
\Statex \textit{// Initialization}
\State Sample $\{x^{\BB}_i\}_{i=1}^{n_\text{init}}$ via Latin hypercube in $\mathcal{X}^{\BB}$
\For{$i = 1, \ldots, n_\text{init}$}
    \State Evaluate $y_i = f^{\BB}(x^{\BB}_i)$
    \State Solve inner problem (BH): $x^{\WB\star}_i = \arg\min_{x^{\WB}} J(x^{\WB}, y_i, x^{\BB}_i)$ s.t.\ $f^{\WB}(x^{\WB}, y_i)=0$, $g(x^{\WB}, y_i, x^{\BB}_i) \leq 0$
    \State Record $J_i = J(x^{\WB\star}_i)$ (or penalty if infeasible)
\EndFor
\Statex \textit{// BO loop}
\For{$t = n_\text{init}+1, \ldots, n_\text{init}+N$}
    \State Fit GP surrogate $\mathcal{GP}$ to $\{(x^{\BB}_i, J_i)\}_{i=1}^{t-1}$
    \State Select $x^{\BB}_t = \arg\max_{x^{\BB} \in \mathcal{X}^{\BB}} \alpha_\text{EI}(x^{\BB}; \mathcal{GP}, \xi)$
    \State Evaluate $y_t = f^{\BB}(x^{\BB}_t)$
    \State Solve inner problem (BH) $\to x^{\WB\star}_t$, record $J_t$
    \State Update dataset
\EndFor
\State \Return best $(x^{\BB\star}, x^{\WB\star}, J^\star)$
\end{algorithmic}
\end{algorithm}

\begin{figure}[htbp]
    \centering
    % TikZ diagram inlined (template requires no \input{} for tex files)
    % Define colors from the image
    \definecolor{outerfill}{HTML}{EADDE5} % Light pink/purple for top box fill
    \definecolor{outerborder}{HTML}{5E3F4F} % Darker border for top box
    \definecolor{arrowpurple}{HTML}{5E3F4F} % Dark purple for the thick arrow
    \definecolor{innerborder}{HTML}{A0A0A0} % Light gray border for inner box

    \begin{tikzpicture}[
        scale=0.65, % Scale down the entire figure
        transform shape, % Scale text and line widths proportionally
        font=\sffamily,
        >=Latex,
        node distance=1.5cm,
        outer box/.style={
            draw=outerborder,
            fill=outerfill,
            line width=1.5pt,
            rounded corners=8pt,
            inner sep=15pt,
            align=left,
            font=\large,
            text width=18cm
        },
        title tab/.style={
            draw=outerborder,
            fill=white,
            line width=1.5pt,
            rounded corners=5pt,
            anchor=north west,
            xshift=10pt,
            yshift=13pt,
            font=\bfseries\large,
            inner sep=5pt
        },
        black box/.style={
            draw=none,
            fill=black,
            rectangle,
            minimum width=5cm,
            minimum height=3.5cm,
            text=white,
            align=center,
            font=\Large
        },
        inner box/.style={
            draw=innerborder,
            fill=white,
            line width=1pt,
            rounded corners=8pt,
            inner sep=15pt,
            align=left,
            font=\large,
            minimum width=6cm,
            minimum height=4cm
        },
        inner title tab/.style={
            title tab,
            draw=innerborder
        },
        arrow label/.style={
            font=\Large\bfseries
        }
    ]

    \node[outer box] (outer) at (0,0) {
        Given some dataset $\mathcal{D} = \{x_i^{BB}, J_i^*\} \, i = 1 \dots n$, create a surrogate model $\hat{J} = \mathcal{GP}(x^{BB})$ \\[0.8em]
        Estimate $x_{n+1}^{BB}$
    };
    \node[title tab] at (outer.north west) {Outer Problem (BO)};

    \node[black box, anchor=north west] (blackbox) at ([yshift=-2cm, xshift=1.7cm]outer.south west) {
        Black-box Function \\[1em]
        $y_{n+1} = f^{BB}(x_{n+1}^{BB})$
    };

    \node[inner box, right=1.3cm of blackbox] (inner) {
        Solve given $x_{n+1}^{BB}$, $y_{n+1}$: \\[0.8em]
        \quad $\begin{aligned}
            x_{n+1}^{WB*} = \arg \min_{x^{WB}} J(x^{WB}, y_{n+1}, x_{n+1}^{BB}) \\
            \text{s.t.} \quad f^{WB}(x^{WB}, y_{n+1}) &= 0 \\
            g(x^{WB}, y_{n+1}, x_{n+1}^{BB}) &\leq 0
        \end{aligned}$
    };
    \node[inner title tab] at (inner.north west) {Inner Problem (BH)};

    \draw[->, line width=6pt, outerborder, -{Latex[length=6mm, width=8mm]}]
        ([xshift=0.5cm]outer.south west) |- (blackbox.west);

    \node[arrow label, left=0.1cm] at ([xshift=0.5cm, yshift=1cm]blackbox.north west) {$x_{n+1}^{BB}$};

    \draw[->, line width=6pt, black, -{Latex[length=6mm, width=8mm]}]
        (blackbox.east) -- (inner.west);

    \node[arrow label, above=0.6cm] at ($(blackbox.east)!0.5!(inner.west)$) {$y_{n+1}$};

    % \draw[->, fill=white, bor]
    \draw[->, line width=6pt, white, draw=innerborder, -{Latex[length=6mm, width=8mm]}]
        (inner.east) -| ([xshift=-0.5cm]outer.south east);

    \node[arrow label, left=0.2cm] at ([xshift=-0.9 cm, yshift=-0.9cm]outer.south east) {$J_{n+1}^*(x_{n+1}^{WB*})$};

    % \begin{scope}[shift={(inner.east)}, xshift=1cm, yshift=-1.5cm, scale=1.2]
    %     \coordinate (base left) at (0,0);
    %     \coordinate (base right) at (1.5,0);
    %     \coordinate (shaft top left) at (0, 3);
    %     \coordinate (shaft top right) at (1.5, 3);
    %     \coordinate (head tip) at (0.75, 4.5);
    %     \coordinate (head left) at (-0.5, 3);
    %     \coordinate (head right) at (2.0, 3);

    %     \draw[draw=innerborder, line width=1.5pt, fill=white, rounded corners=4pt]
    %         (base left) -- (base right) --
    %         (shaft top right) -- (head right) --
    %         (head tip) --
    %         (head left) -- (shaft top left) -- cycle;

    %     \node[arrow label, below=0.2cm] at ($(base left)!0.5!(base right)$) {$J_{n+1}^*(x_{n+1}^{WB*})$};
    % \end{scope}

    \end{tikzpicture}
    \caption{Flow of the multi-scale Bayesian optimization framework. The outer loop uses BO to select black-box variables $x^{\BB}$, which are evaluated through the black-box model to produce parameters $y$. The inner loop solves the white-box optimization over $x^{\WB}$ via Basin-Hopping given $x^{\BB}$ and $y$, returning the optimal objective $J(x^{\WB\star},\allowbreak y,\allowbreak x^{\BB})$ to update the BO surrogate.}
    \label{fig:multiscale_diagram}
\end{figure}

% \subsection{Surrogate Model and Acquisition Function}\label{subsec:surrogate_and_acquisition}
We describe the proposed approach in Algorithm~\ref{alg:bilevel_bo} and Figure~\ref{fig:multiscale_diagram} and provide a discussion as follows: The GP surrogate in Algorithm~\ref{alg:bilevel_bo} models the scalar mapping~\eqref{eq:value_function}---a function of $x^{\BB}$ alone. In our implementation, we use a Gaussian process with an automatic relevance determination (ARD) radial basis function (RBF) kernel, where a separate length scale per dimension allows the GP to adapt to anisotropic landscapes. The outer loop selects the next evaluation by maximizing Expected Improvement (EI) with exploration parameter $\xi \geq 0$. Kernel and acquisition function details for our implementation are in Appendices~\ref{sec:gp_details} and~\ref{sec:acquisition_functions}; here we emphasize that both are standard---the gains reported in Section~\ref{sec:results} come from problem structure, not from novel surrogate or acquisition choices, although we expect that adopting them would yield further benefits.

\subsection{Key Properties}\label{subsec:key_properties}

The bilevel reformulation~\eqref{eq:bilevel} has several properties worth highlighting. The first is formalized as a proposition.

\begin{proposition}\label{prop:dimensionality}
Under Assumption~\ref{ass:separability}, the bilevel reformulation~\eqref{eq:bilevel} preserves the global optimum of~\eqref{eq:full_problem} and reduces the GP surrogate domain from $\mathbb{R}^{n_{\WB}+n_{\BB}}$ to $\mathbb{R}^{n_{\BB}}$.
\end{proposition}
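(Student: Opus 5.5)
The plan is to prove the two claims separately. The first claim, preservation of the global optimum, is the standard equivalence between a joint minimization and an iterated minimization. The second, the reduction of the surrogate domain, follows directly from the definition of the outer objective.

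For the first claim, define the feasible set of~\eqref{eq:full_problem} as
\[
\mathcal{F}=\{(x^{\WB},x^{\BB}) : x^{\BB}\in\mathcal{X}^{\BB},\ x^{\WB}\in\mathcal{F}(x^{\BB})\},
\]
where $\mathcal{F}(x^{\BB})$ is the feasible set of the inner subproblem~\eqref{eq:inner_subproblem} with $y=f^{\BB}(x^{\BB})$ substituted. This decomposition is exactly where Assumption~\ref{ass:separability}(i) enters. Because $y$ is determined by $x^{\BB}$ alone, constraint~\eqref{eq:material_constraint} can be eliminated by substitution. The remaining constraints~\eqref{eq:process_constraint},~\eqref{eq:inequality_constraints} and~\eqref{eq:bounds} then become, for each fixed $x^{\BB}$, precisely the constraints of the inner problem. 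Thus $\mathcal{F}$ is the disjoint union over $x^{\BB}$ of the slices $\{x^{\BB}\}\times\mathcal{F}(x^{\BB})$.

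Next define the value function
\[
\phi(x^{\BB})=\inf_{x^{\WB}\in\mathcal{F}(x^{\BB})} J\bigl(x^{\WB},f^{\BB}(x^{\BB}),x^{\BB}\bigr),
\]
with the convention $\phi=+\infty$ when the slice is empty, which Remark~2 implements as a penalty. The elementary identity
\[
\inf_{\mathcal{F}} J=\inf_{x^{\BB}\in\mathcal{X}^{\BB}}\phi(x^{\BB})
\]
then follows by a two-sided inequality. For any feasible $(x^{\WB},x^{\BB})$ we have $J\ge\phi(x^{\BB})\ge\inf\phi$, which gives one direction. For the other, given $\varepsilon>0$, choose $x^{\BB}$ with $\phi(x^{\BB})<\inf\phi+\varepsilon$ and then $x^{\WB}$ in its slice within $\varepsilon$ of $\phi(x^{\BB})$.

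To pass from values to minimizers, use Assumption~\ref{ass:separability}(ii): the inner infimum is attained by the global optimizer $x^{\WB\star}(x^{\BB})$, so $\phi(x^{\BB})=J(x^{\WB\star}(x^{\BB}),f^{\BB}(x^{\BB}),x^{\BB})$, which is exactly the outer objective~\eqref{eq:outer_obj}. Two consequences follow. If $(\bar x^{\WB},\bar x^{\BB})$ is a global minimizer of~\eqref{eq:full_problem}, then $\bar x^{\BB}$ minimizes $\phi$. Conversely, if $\bar x^{\BB}$ minimizes $\phi$, then $(x^{\WB\star}(\bar x^{\BB}),\bar x^{\BB})$ is a global minimizer of~\eqref{eq:full_problem}. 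The optimal values therefore coincide.

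The step I expect to be the main obstacle is the \emph{global} qualifier in the inner problem. If the inner solver returned only a local minimizer, the outer objective would overestimate $\phi$ and the equivalence could fail. I would therefore state explicitly that the argmin in~\eqref{eq:inner_obj} denotes a global minimizer, so that the choice among multiple global minimizers does not affect the value. I would also note that existence of an outer minimizer is assumed, or else the statement is read at the level of optimal values.

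The dimensionality claim is then immediate. By the identity above, the outer problem is the minimization of the scalar function $\phi:\mathcal{X}^{\BB}\subset\mathbb{R}^{n_{\BB}}\to\mathbb{R}\cup\{+\infty\}$, i.e.\ the map~\eqref{eq:value_function}. Since $x^{\WB}$ has been eliminated by the inner minimization, the GP in Algorithm~\ref{alg:bilevel_bo} is trained on pairs $(x^{\BB}_i,J_i)$. Its input domain is therefore $\mathbb{R}^{n_{\BB}}$, rather than the joint space $\mathbb{R}^{n_{\WB}+n_{\BB}}$ that monolithic BO on~\eqref{eq:full_problem} would require.
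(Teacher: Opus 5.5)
Your proposal is correct and follows essentially the same route as the paper's proof in Appendix~\ref{sec:proof_dimensionality}. Both arguments project feasible points of~\eqref{eq:full_problem} onto outer-feasible $x^{\BB}$ via the inner global minimizer (the paper's steps (i)--(ii)). Both exclude $x^{\BB}$ with an empty inner feasible set, where your $\phi=+\infty$ convention plays the role of the paper's step (iii), and both read off the dimensionality claim from the domain of the value function~\eqref{eq:value_function}. Your version is slightly cleaner because it separates the equality of optimal values, which needs no attainment, from the correspondence of minimizers, which is where Assumption~\ref{ass:separability}(ii) is actually used.
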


\noindent The proof is in Appendix~\ref{sec:proof_dimensionality}. The key consequences are threefold:

\emph{Exact constraint satisfaction.} Constraints $g(x^{\WB},\allowbreak y,\allowbreak x^{\BB}) \leq 0$ are handled by the inner global optimizer, not by penalty functions or chance constraints. The GP never needs to learn the constraint boundary. When the inner optimizer converges to a globally feasible point (Assumption~\ref{ass:separability}(ii)), feasibility is guaranteed. In our implementation, Basin-Hopping (BH)~\cite{wales2003energy,wales1997global, li1987monte, wales1999global, olson2012basin} with Sequential Least Squares Programming (SLSQP)~\cite{nocedal2006numerical, kraft1988software, lawson1995solving} local minimization provides high-confidence global solutions. We also compare a simpler solution using SLSQP with multiple restarts as a more time-efficient method to solve the reduced-dimension inner problem. Further guarantees of global optimization could be obtained with deterministic global solvers (e.g., BARON), as discussed in Section~\ref{sec:discussion}. Details on these solvers and implementation are available in Appendix~\ref{sec:solver_benchmarks}.

\emph{Scalar surrogate.} The method requires only one GP for the scalar optimal objective $J(x^{\WB\star},\allowbreak y,\allowbreak x^{\BB})$. Approaches that surrogate the black-box outputs $y \in \mathbb{R}^{n_y}$ instead require a multi-output GP, with the associated challenges of cross-covariance specification and $O(n_y^3)$ scaling.

Additional implementation details---infeasibility handling and solver modularity---are discussed in Appendix~\ref{sec:solver_benchmarks}.

\subsection{Relationship to Existing Grey-Box Methods}\label{subsec:contrast_greybox_approaches}

% Comparison table: our method vs COBALT vs BOCF vs full-space BO
\begin{table}[htbp]
\centering
\caption{Comparison of grey-box BO approaches. Bilevel BO (this work) is the only method that both separates variables and solves the white-box subproblem exactly.}
\label{tab:method_comparison}
\resizebox{\textwidth}{!}{%
\begin{tabular}{@{}lcccc@{}}
\toprule
& \textbf{Bilevel BO} & \textbf{COBALT} & \textbf{BOCF} & \textbf{Full-space BO} \\
& (this work) & \cite{paulson2022cobalt} & \cite{astudillo2019bocf} & \\
\midrule
What is modeled & $x^{\BB} \mapsto J^\star$ & $x^{\BB} \mapsto y$ & $x \mapsto h(x)$ & $x \mapsto J$ \\
Model type & scalar GP & multi-output GP & vector GP & scalar GP \\
Model dimension & $n_{\BB}$ & $n_{\BB}$ & $n_{\WB}+n_{\BB}$ & $n_{\WB}+n_{\BB}$ \\
Variables separated? & Yes & Yes & No & No \\
Constraint handling & exact (global opt.) & chance constraints & penalty & penalty \\
Inner optimization & global optimization (BH) & moment propagation & none & none \\
$y$-dependent constraints & via inner optimizer & natively & penalty & penalty \\
\bottomrule
\end{tabular}}
\end{table}

Table~\ref{tab:method_comparison} summarizes the distinctions between the proposed method and a subset of the methods reviewed in Section~\ref{sec:related_work} that are most directly related and relevant. Both bilevel BO and COBALT achieve the same input dimensionality reduction to $n_{\BB}$. The distinction is in the surrogate output (scalar vs.\ multi-output GP) and constraint handling (exact global optimization vs.\ moment approximation), not the input dimension. COBALT and BOCF surrogate intermediate black-box outputs $y$ and propagate uncertainty through the white-box equations, requiring multi-output GPs and moment approximations. Our method surrogates the scalar optimal objective $J(x^{\WB\star})$ directly, at the cost of an inner global optimization solve per outer iteration. When the white-box model is moderately sized and the black-box evaluation is the primary computational expense, this overhead is negligible. The approaches are complementary: COBALT handles $y$-dependent constraints natively and enables risk-averse decisions via uncertainty propagation; BOCF handles non-separable composite functions. Our method trades their generality for simplicity and exactness when Assumption~\ref{ass:separability} holds.

%% ============================================================================
%% SECTION 5: BENCHMARK SUITE
%% ============================================================================
\section{Benchmark Problem Suite}\label{sec:benchmark_suite}

We propose a suite of 13 problems spanning synthetic test functions and engineering domains, summarized in Table~\ref{tab:benchmark_summary}. Full mathematical formulations are in Appendix~\ref{sec:test_problems}. The suite is designed around five principles: (1)~every problem has a verified global optimum (Appendix~\ref{sec:optimum_verification}); (2)~constraints are physically motivated; (3)~black-box functions encode realistic structure-property relationships (Sabatier volcano curves, Langmuir isotherms, Robeson upper bounds, Hansen solubility); (4)~problems span a range of difficulty (unconstrained to 3 constraints, 2D to 5D, min and max); and (5)~all problems are cheap to evaluate, enabling the 8{,}450-run statistical comparison in Section~\ref{sec:results}.

To our knowledge, no existing benchmark suite targets a separable grey-box structure that reflects Assumption~\ref{ass:separability}: an outer black-box mapping $y = f^{\BB}(x^{\BB})$ coupled with an inner white-box optimization over $x^{\WB}$, subject to inequality constraints. Existing problem suites for composite BO~\cite{astudillo2019bocf} do not separate variables; those used in grey-box BO~\cite{paulson2022cobalt} contain only a handful of problems; and standard global optimization testbeds such as the IEEE Congress on Evolutionary Computation (CEC) benchmarks~\cite{suganthan2005cec} and the Comparing Continuous Optimizers (COCO) platform with its Black-Box Optimization Benchmarking (BBOB) suite~\cite{hansen2021coco} lack bilevel structure entirely.

\begin{table}[htbp]
\centering
\caption{Summary of the 13-problem benchmark suite. $n_{\WB}$: white-box variables, $n_{\BB}$: black-box variables, $n_y$: black-box outputs, $n_g$: inequality constraints.}
\label{tab:benchmark_summary}
\resizebox{\textwidth}{!}{%
\begin{tabular}{@{}lccccll@{}}
\toprule
\textbf{Problem} & $n_{\WB}$ & $n_{\BB}$ & $n_y$ & $n_g$ & \textbf{Type} & \textbf{Domain} \\
\midrule
Small-Feasible-Region 1 & 1 & 1 & 1 & 1 & min & Synthetic \\
Small-Feasible-Region 2 & 1 & 1 & 1 & 1 & min & Synthetic \\
Rastrigin & 2 & 1 & 1 & 0 & min & Multimodal \\
Toy-Hydrology & 1 & 1 & 1 & 2 & min & Hydrology \\
Rosen-Suzuki & 2 & 2 & 2 & 3 & min & Constrained \\
\midrule
CSTR & 3 & 2 & 3 & 2 & max & Catalysis \\
Heat-Exchanger & 3 & 2 & 3 & 2 & min & Heat Exchanger Network \\
PSA & 3 & 2 & 3 & 2 & min & Adsorption \\
Batch-Reactor & 3 & 2 & 3 & 2 & min & Kinetics \\
Distillation & 3 & 2 & 3 & 3 & min & Separation \\
Evaporator & 3 & 2 & 3 & 2 & min & Evaporation \\
Membrane & 3 & 2 & 3 & 2 & min & Membrane sep.\ \\
Williams-Otto & 3 & 2 & 2 & 2 & max & Process opt.\ \\
\bottomrule
\end{tabular}}
\end{table}

\subsection{Representative Problems}\label{subsec:representative_problems}

We highlight two problems; full formulations for all 13 appear in Appendix~\ref{sec:test_problems}.
% \noindent\textbf{Small-Feasible-Region Peovlwma (Appendix~\ref{subsec:app_sfr1}).}
Adapted from Gardner et al.~\cite{gardner2014} and Ariafar et al.~\cite{Ariafar2019}, these 2D constrained minimization problems feature small, disconnected feasible regions. The two-dimensional structure permits direct visualization of the search behavior (Section~\ref{subsec:sfr_results}).

% \noindent\textbf{CSTR with Catalyst Design (Appendix~\ref{subsec:app_cstr}).}
% This 5D problem maximizes yield of intermediate $B$ in consecutive reactions $A \to B \to C$. Three white-box variables (temperature, pressure, residence time) are governed by Arrhenius kinetics and CSTR mass balances; two black-box variables (binding energy, activation-energy shift) map to kinetic parameters via a Sabatier volcano curve. Two constraints enforce byproduct limits and a safety envelope. This is representative of the eight engineering problems in the suite.

%% ============================================================================
%% SECTION 6: COMPUTATIONAL EXPERIMENTS
%% ============================================================================
\section{Computational Experiments}\label{sec:experiments}\label{sec:results}

We evaluate bilevel BO against full-space baselines on the 13-problem benchmark suite. Section~\ref{sec:experimental_setup} specifies the solvers, metrics, and hyperparameter sweep. Section~\ref{subsec:sfr_results} then uses the two Small-Feasible-Region problems to visualize how bilevel and black-box BO explore the search space differently. Section~\ref{subsec:main_result} reports final regret across all 13 benchmarks, convergence speed, and robustness to hyperparameter and dimension.

\subsection{Experimental Design}\label{sec:experimental_setup}

This section specifies the four solvers compared (Section~\ref{subsec:solvers_compared}) and the four metrics reported (Section~\ref{subsec:metrics}). Per-problem hyperparameter settings and full sweep results are deferred to Appendix~\ref{sec:hyperparameter_details}.

\subsubsection{Solvers Compared}\label{subsec:solvers_compared}

We compare four solvers that represent different ways of handling the grey-box structure in problem~\eqref{eq:full_problem}.

The first solver, \textbf{black-box BH}, applies SciPy's Basin-Hopping (BH) solver over the full variable space $[x^{\WB}, x^{\BB}]$ with an SLSQP local minimizer. Basin-Hopping performs a global search by alternating random perturbations (uniform steps clipped to variable bounds) with local SLSQP minimizations, accepting or rejecting each local minimum via a Metropolis criterion ($T=100$). This solver provides a global-search baseline: Basin-Hopping is gradient-free at the global level and explores the full feasible region. However, it evaluates $f^{\BB}$ at every local minimization, so the total number of black-box evaluations grows with the number of Basin-Hopping iterations.

The second solver, \textbf{black-box BO (BB-BO)}, applies standard Bayesian optimization over the full $(n_{\WB}+n_{\BB})$-dimensional space. A single GP surrogate models the mapping $[x^{\WB}, x^{\BB}] \mapsto J$. Constraints are handled via a penalty function. This solver ignores the separable structure entirely and serves as the BO baseline.

The third solver, \textbf{bilevel BO with multi-start SLSQP (Bi-BO (SLSQP))}, is the proposed method with a gradient-based inner solver. It places the GP surrogate over only the $n_{\BB}$-dimensional black-box space and solves the constrained white-box subproblem using 50 Latin hypercube restarts of SLSQP, returning the best feasible solution found, as described in Algorithm~\ref{alg:bilevel_bo}.

The fourth solver, \textbf{bilevel BO with Basin-Hopping (Bi-BO (BH))}, is identical to the third except that it uses Basin-Hopping (100 iterations, SLSQP local minimizer, Metropolis temperature $T=100$) as the inner-loop global solver. Both bilevel variants share the same outer-loop GP surrogate and EI acquisition function; they differ only in the inner-loop solver used for the white-box subproblem.

All four solvers use Expected Improvement (EI) as the acquisition function for the BO variants. All three BO solvers share the same ARD RBF kernel and GP configuration (Appendix~\ref{sec:gp_details}). We sweep over initial sample sizes $n_\text{init} \in \{5, 25, 50\}$ and exploration parameters $\xi \in \{0.001, \ldots, 1.0\}$, yielding 8{,}450 independent runs across all 13 problems (8{,}190 BO + 130 NLP + 130 BH; Appendix~\ref{sec:hyperparameter_details}).

\subsubsection{Metrics}\label{subsec:metrics}

We report four metrics. \emph{Simple regret} $r_t = |J_\text{best}(t) - J^\star|$ measures the gap between the best objective found after $t$ evaluations and the verified global optimum $J^\star$. \emph{Convergence speed} is the number of iterations needed to reduce regret to 1\% of its initial value. \emph{Wall time} is the total elapsed time for $n_\text{init} + 200$ evaluations. Finally, we count the total number of \emph{black-box evaluations} (calls to $f^{\BB}$) to assess sample efficiency.

\subsection{Mechanism Visualization}\label{subsec:sfr_results}

We begin with the two Small-Feasible-Region problems because their two-dimensional structure permits direct visualization of how bilevel BO searches differently from black-box BO. Both problems have one white-box variable ($x^{\WB}$) and one black-box variable ($x^{\BB}$), with small, disconnected feasible regions defined by nonlinear constraints.

% SFR-1 search figure
\begin{figure}[htbp]
    \centering
    \includegraphics[width=\textwidth]{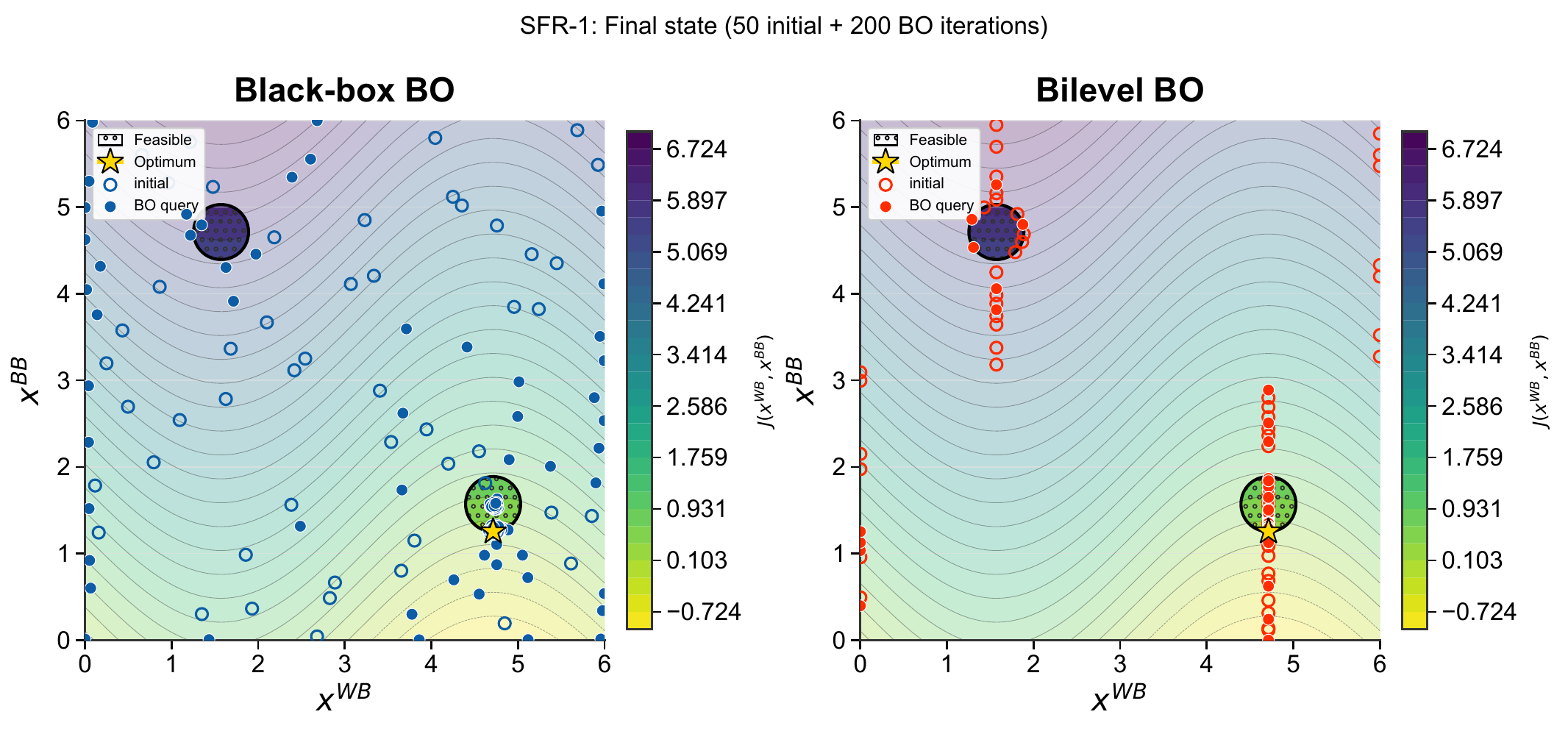}
    \caption{Small-Feasible-Region~1: search behavior of black-box BO (left) and bilevel BO (right) after 50 initial samples + 200 BO iterations. The colored background shows the objective value $J(x^{\WB}, y)$ where feasible (yellow = low, purple = high). White regions are \emph{infeasible}: no combination of $(x^{\WB}, x^{\BB})$ in those areas satisfies the constraints. The hatched islands are the only feasible regions, and the $\star$ marks the global optimum within them. Black-box BO scatters queries across the full $(x^{\WB}, x^{\BB})$ space, wasting many evaluations in infeasible white regions. Bilevel BO operates along the $x^{\BB}$ axis only---each query selects an $x^{\BB}$ value, and the inner BH solver finds the best feasible $x^{\WB}$, concentrating evaluations in or near the feasible islands.}
    \label{fig:sfr1_search}
\end{figure}

Figure~\ref{fig:sfr1_search} shows the key difference. In SFR-1, the black-box function is the identity ($y = x^{\BB}$), so the bilevel decomposition reduces to: given a candidate $y$-value from the outer loop, find the $x^{\WB}$ that minimizes $\sin(x^{\WB}) + y$ within the feasible set. The inner BH solver handles constraint satisfaction exactly, so every bilevel BO query lands in a feasible region. Black-box BO, by contrast, must jointly learn the objective function landscape \emph{and} the constraint boundary in two dimensions, wasting many samples in infeasible regions.

% SFR-2 search figure
\begin{figure}[htbp]
    \centering
    \includegraphics[width=\textwidth]{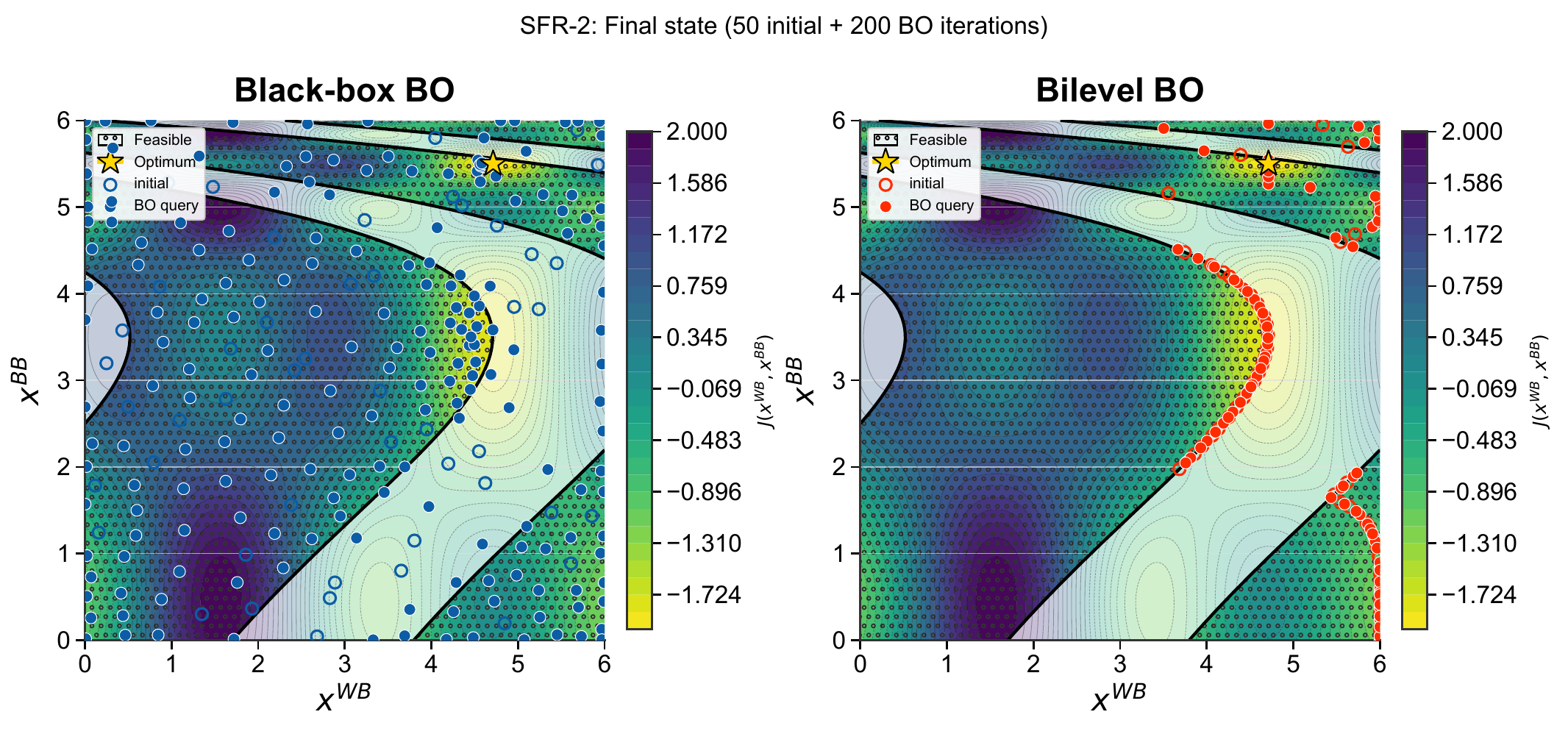}
    \caption{Small-Feasible-Region~2: same layout as Figure~\ref{fig:sfr1_search}. Here the black-box function is nonlinear ($y = \frac{x^{\BB}-5.5}{2}\exp(x^{\BB}/2)$), distorting the relationship between $x^{\BB}$ and the feasible set. Bilevel BO still operates near the optimum because the GP surrogate learns the scalar mapping $x^{\BB} \mapsto J(x^{\WB\star})$ in one dimension.}
    \label{fig:sfr2_search}
\end{figure}

SFR-2 (Figure~\ref{fig:sfr2_search}) introduces a nonlinear black-box function, making the mapping from $x^{\BB}$ to the feasible objective landscape more complex. The same mechanism applies: the outer loop proposes $x^{\BB}$, the black box function is used to compute $y$, and the inner BH solver finds the best feasible $x^{\WB}$. The GP surrogate now models a one-dimensional function $x^{\BB} \mapsto J(x^{\WB\star})$ rather than the two-dimensional objective-plus-constraint landscape that black-box BO must learn.

% SFR convergence curves
\begin{figure}[htbp]
    \centering
    \includegraphics[width=\textwidth]{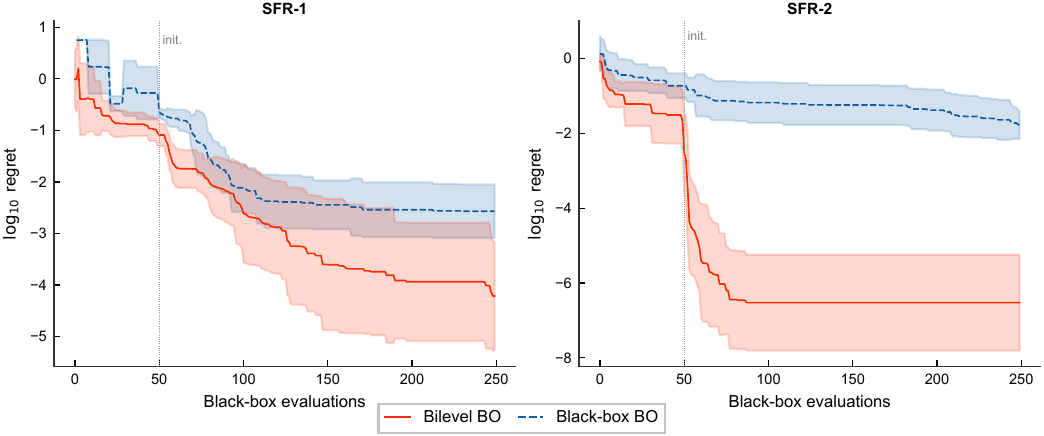}
    \caption{Regret convergence for the two Small-Feasible-Region problems. Each curve uses $n_\text{init}=50$ initial Latin hypercube samples and the EI exploration parameter $\xi$ that yielded the lowest final regret. The dashed line marks the end of initialization. Solid lines show the mean over 10 independent repetitions; shaded bands show $\pm 1$ standard deviation. Bilevel BO (red) converges 1--2 orders of magnitude below black-box BO (blue) on both problems.}
    \label{fig:sfr_convergence}
\end{figure}

The convergence curves (Figure~\ref{fig:sfr_convergence}) confirm the visual impression. On SFR-1, Bi-BO (SLSQP) reduces regret by an order of magnitude relative to black-box BO---an 11$\times$ improvement. On SFR-2, Bi-BO (SLSQP) reaches the global optimum (regret $< 10^{-5}$) while black-box BO stalls near $10^{-1.6}$---a 7{,}872$\times$ improvement. In both cases, the mechanism is the same: bilevel BO needs only to learn a one-dimensional function, while black-box BO must simultaneously model a two-dimensional objective and discover the small feasible islands.

\subsection{Main Results}\label{subsec:main_result}

The pattern observed on the SFR problems extends to all 13 benchmarks. Table~\ref{tab:final_regret} reports final regret after 200 BO iterations; Figure~\ref{fig:convergence_grid} shows convergence curves for the remaining 11 problems.

% 11-panel convergence grid (excludes SFR-1 and SFR-2)
\begin{figure}[htbp]
    \centering
    \includegraphics[width=\textwidth]{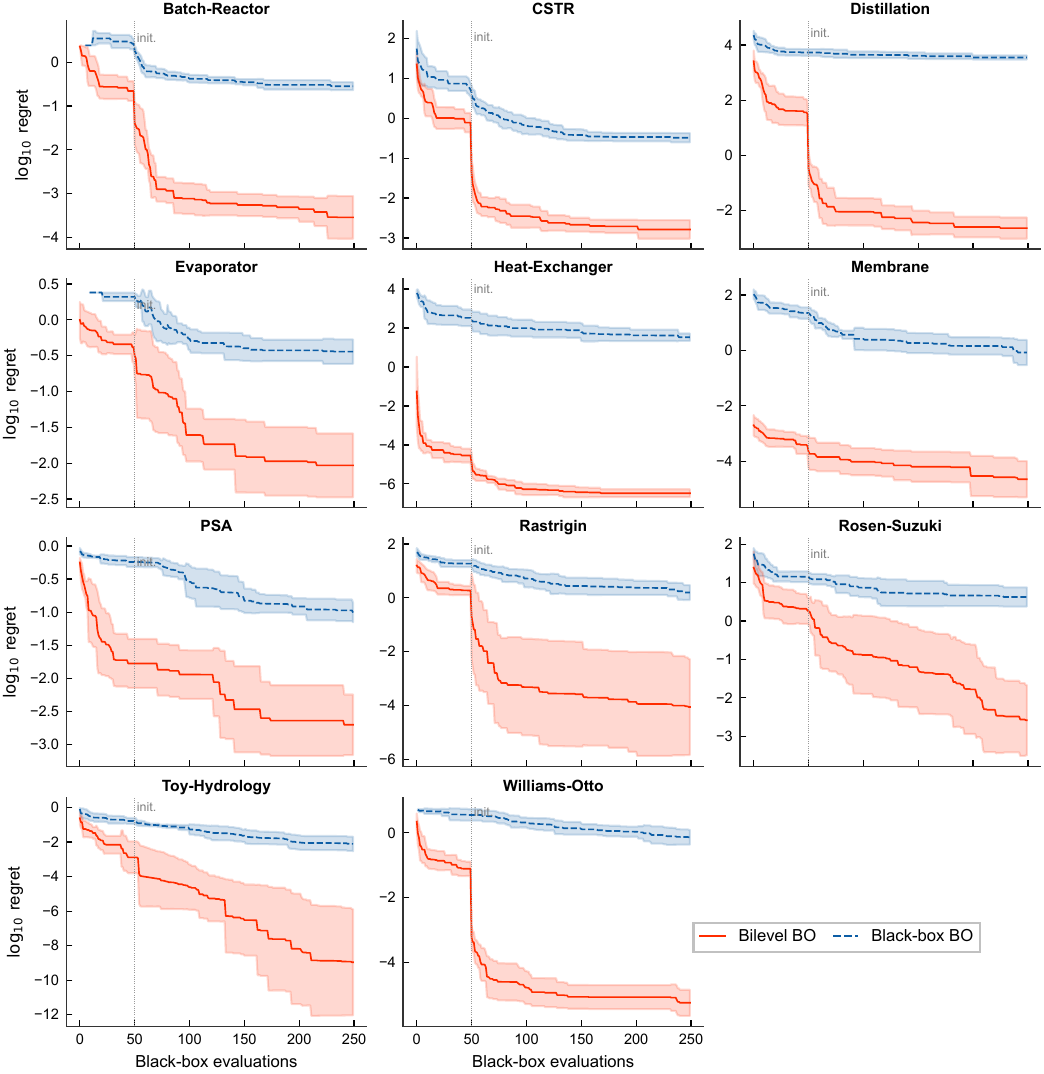}
    \caption{Regret convergence for the 11 benchmark problems not shown in Figure~\ref{fig:sfr_convergence}. Bi-BO (SLSQP) (red) converges to values that are 1--8 orders of magnitude below black-box BO (blue) on every problem. Each curve uses $n_\text{init}=50$ initial Latin hypercube samples and the EI exploration parameter $\xi$ that yielded the lowest final regret for that problem (see Appendix~\ref{sec:hyperparameter_details} for per-problem values). Solid lines show the mean over 10 independent repetitions; shaded bands show $\pm 1$ standard deviation.}
    \label{fig:convergence_grid}
\end{figure}

% Final regret summary table
\begin{table}[htbp]
\centering
\small
\caption{Mean final regret and mean wall time (seconds) after 200 iterations ($n_\text{init}=50$, best $\xi$ per method, 10 repetitions; see Appendix~\ref{sec:hyperparameter_details} for per-problem $\xi$ values). Bi-BO (SLSQP) achieves lower regret on all 13 problems; Bi-BO (BH) on 12 of 13 ($^\dagger$BH inner solver is worse than BB-BO on SFR-1).}
\label{tab:final_regret}
\resizebox{\textwidth}{!}{%
\begin{tabular}{@{}lc rr rr rr@{}}
\toprule
 & & \multicolumn{2}{c}{\textbf{BB-BO}} & \multicolumn{2}{c}{\textbf{Bi-BO (SLSQP)}} & \multicolumn{2}{c}{\textbf{Bi-BO (BH)}} \\
\cmidrule(lr){3-4} \cmidrule(lr){5-6} \cmidrule(lr){7-8}
\textbf{Problem} & \textbf{dim} & \textbf{Regret} & \textbf{Time (s)} & \textbf{Regret} & \textbf{Time (s)} & \textbf{Regret} & \textbf{Time (s)} \\
\midrule
Small-Feasible-Region   & 2 & 0.0048  & 207  & 0.0004 & 141 & 0.0334$^\dagger$ & 826   \\
Small-Feasible-Region-2 & 2 & 0.0243  & 197  & 0.0000 & 122 & 0.0000           & 150   \\
Rastrigin               & 3 & 1.8464  & 246  & 0.0026 & 104 & 0.0000           & 168   \\
Toy-Hydrology           & 2 & 0.0114  & 216  & 0.0000 & 116 & 0.0000           & 173   \\
Rosen-Suzuki            & 4 & 4.8785  & 281  & 0.0243 & 233 & 0.0567           & 356   \\
\midrule
CSTR                    & 5 & 0.3381  & 326  & 0.0019 & 201 & 0.0016           & 265   \\
Heat-Exchanger          & 5 & 37.1276 & 284  & 0.0000 & 200 & 0.0000           & 230   \\
PSA                     & 5 & 0.1044  & 315  & 0.0030 & 184 & 0.0010           & 275   \\
Batch-Reactor           & 5 & 0.2894  & 301  & 0.0005 & 216 & 0.0003           & 339   \\
Distillation            & 5 & 3621.95 & 260  & 0.0034 & 182 & 0.0078           & 231   \\
Evaporator              & 5 & 0.3841  & 307  & 0.0145 & 212 & 0.0254           & 249   \\
Membrane                & 5 & 1.2410  & 274  & 0.0000 & 214 & 0.0000           & 274   \\
Williams-Otto           & 5 & 0.8251  & 273  & 0.0000 & 326 & 0.0000           & 3{,}130 \\
\bottomrule
\end{tabular}}
\end{table}

Both bilevel variants achieve lower regret than black-box BO. Bi-BO (SLSQP) wins on all 13 problems (11$\times$--$10^{8}\times$ lower; geometric mean 3{,}192$\times$) at wall time comparable to BB-BO. Bi-BO (BH) wins on 12 of 13 (geometric mean 4{,}840$\times$), but Basin-Hopping's random perturbations overshoot the narrow feasible region of SFR-1 and leave it worse than BB-BO there. Gains scale with constraint tightness: Heat-Exchanger ($10^{8}\times$) and Distillation ($10^{6}\times$) expose the penalty method's failure in five dimensions, while the smallest SLSQP gains (SFR-1: 11$\times$, Evaporator: 26$\times$) occur where the white-box subproblem itself has a narrow feasible region or multiple local optima.

Wall-time differences concentrate on problems with large infeasible regions in $x^{\BB}$. When a sampled $x^{\BB}$ admits no feasible process variables, Basin-Hopping keeps perturbing in search of a feasible point rather than returning quickly, inflating runtime to 826\,s on SFR-1 and 3{,}130\,s on Williams-Otto (4--10$\times$ the SLSQP inner solver). SLSQP exits on infeasibility and lets the outer BO loop advance.

The full-space BH baseline fails on Rastrigin (multimodal joint space) and performs poorly on Distillation; Bi-BO matches or approaches its quality on the remaining problems using $\sim$250 black-box evaluations versus the substantially higher counts BH requires. All 13 SLSQP pairwise comparisons are statistically significant (Wilcoxon signed-rank, Holm--Bonferroni adjusted $p < 0.05$; sign test 13/13, $p = 0.000244$); 12 of 13 hold for BH (sign test, $p = 0.0034$). A Friedman test across the four methods confirms differences ($\chi^2 = 17.03$, $p = 0.0007$), with Nemenyi post-hoc placing BB-BO below both Bi-BO variants and NLP (critical difference $= 1.30$).

When using BH as the inner solver for Bi-BO, the performance on the small feasible region problem (SFR-1) is worse than the black-box BO baseline. Similarly, the solution times for the small feasible region problem and the Williams-Otto problem are significantly higher for the Bi-BO (BH) variant compared to both the Bi-BO (SLSQP) variant and the black-box BO baseline. Both of these problems are characterized by narrow feasible regions, and regions which are completely infeasible for certain $x^{\BB}$ values. The random perturbations in the BH algorithm cause it to overshoot these narrow feasible regions, or to iterate until satisfied that no feasible solution exists. This highlights a key tradeoff between the two inner solvers: BH provides stronger global optimization guarantees but can be inefficient in problems with narrow or disconnected feasible regions, while SLSQP is more efficient but may struggle with multimodality. The choice of inner solver should therefore be informed by problem characteristics.

Figure~\ref{fig:scatter_bb_vs_bi} visualizes the relationship between final regret for black-box BO and Bi-BO (SLSQP) across all 273 (problem, $n_\text{init}$, $\xi$) configurations. Every point lies below the diagonal, confirming that Bi-BO outperforms BB-BO regardless of hyperparameter choice. The wide range of regret ratios (11$\times$--$10^{8}\times$) is visible in the vertical spread of points, with the largest gains occurring on problems with tight constraints and high dimensionality. The greater vertical spread in Bi-BO final regret (compared to the relatively narrow horizontal spread in BB-BO) reflects the sensitivity of the inner solver to problem landscape: on problems with multimodal or non-convex inner subproblems (e.g., Evaporator, Rosen-Suzuki), the multi-start SLSQP inner solver occasionally converges to local optima, producing variable final regret across hyperparameter configurations. On problems with well-behaved inner landscapes (e.g., Heat-Exchanger, Membrane), Bi-BO regret is consistently near zero.

% Scatter plot
\begin{figure}[htbp]
    \centering
    \includegraphics[width=0.5\textwidth]{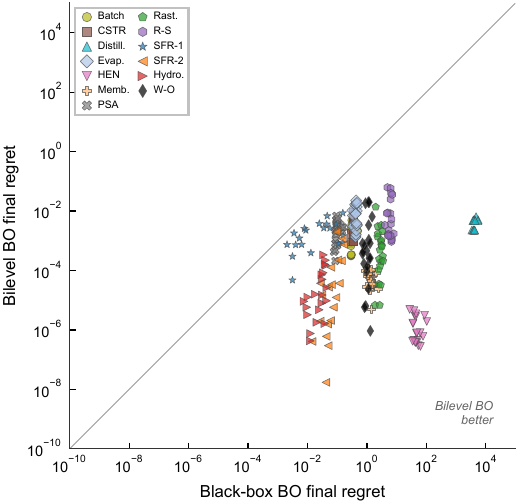}
    \caption{Scatter plot of final regret: black-box BO vs.\ bilevel BO (SLSQP variant). Each point is one (problem, $n_\text{init}$, $\xi$) configuration across 273 total configurations. All points lie below the diagonal, confirming that Bi-BO (SLSQP) outperforms BB-BO regardless of hyperparameter choice.}
    \label{fig:scatter_bb_vs_bi}
\end{figure}

%% ============================================================================
%% SECTION 7: DISCUSSION
%% ============================================================================
\section{Discussion}\label{sec:discussion}

Bi-BO achieves 11$\times$--$10^{8}\times$ lower regret than black-box BO on every problem tested (SLSQP variant; geometric mean 3{,}192$\times$), with equal or faster wall time on 12 of 13 problems. The source of this advantage is structural, not algorithmic: we use the same GP kernel, the same acquisition function, and the same optimizer as the black-box baseline. The gains come entirely from exploiting the separable problem structure---reducing the surrogate dimension and satisfying the white-box constraints exactly. Three additional analyses---feasibility rates, the relationship between regret ratio and problem characteristics, and sample efficiency crossover---provide mechanistic insight into when and why the bilevel decomposition helps.

\noindent\textbf{Constraint satisfaction and wasted evaluations.}
A key mechanism underlying the regret gap is that black-box BO wastes evaluations on infeasible queries. Figure~\ref{fig:feasibility_rates} reports the fraction of all 250 evaluations that land in feasible regions, reconstructed from stored search histories. Black-box BO wastes 6--72\% of its budget on infeasible points depending on the problem, with Small-Feasible-Region (72\%), Batch-Reactor (71\%), and Evaporator (58\%) being the worst cases. Bilevel BO achieves 100\% feasibility on 8 of 12 constrained problems because the inner BH solver handles constraint satisfaction exactly. On the remaining four problems (Batch-Reactor, Evaporator, Small-Feasible-Region, PSA), bilevel BO feasibility ranges from 59--91\%. This occurs for two reasons: (1)~the inner BH may return solutions where the total constraint violation $\max_i g_i(x^{\WB}, y)$ is positive but small (on the order of $10^{-4}$--$10^{-2}$), placing the solution near but outside the feasible boundary; or (2)~certain $y = f^{\BB}(x^{\BB})$ values yield an inner problem with no feasible solution, because no $x^{\WB} \in \mathcal{X}^{\WB}$ satisfies $g(x^{\WB}, y) \leq 0$. This is not a limitation of the bilevel architecture itself but of the inner solver; deterministic global solvers (e.g., BARON) would restore the 100\% guarantee.

% Feasibility rate figure
\begin{figure}[htbp]
    \centering
    \includegraphics[width=\textwidth]{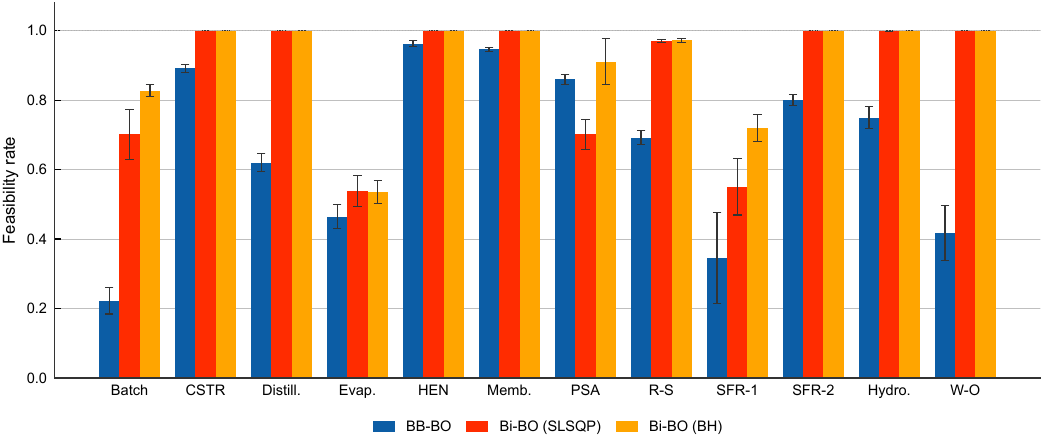}
    \caption{Fraction of evaluations landing in feasible regions across 12 constrained problems ($n_\text{init}=50$, best $\xi$, mean over 10 repetitions). Black-box BO (blue) wastes 6--72\% of its budget on infeasible queries; bilevel BO (red and orange) achieves $\geq$59\% feasibility on all problems and 100\% on 8 of 12.}
    \label{fig:feasibility_rates}
\end{figure}

\noindent\textbf{Factors influencing the regret ratio.}
Factors beyond raw problem dimensionality such as constraint geometry, inner-problem multimodality, and black-box landscape complexity determine the magnitude of improvement from BB-BO to Bi-BO. For example, Rastrigin (unconstrained, 3D total) achieves 698$\times$ improvement with the SLSQP inner solver (Table~\ref{tab:final_regret}) purely from reducing the GP surrogate from 3D to 1D. Meanwhile, Evaporator and Rosen-Suzuki have the smallest SLSQP ratios (26$\times$ and 201$\times$, respectively), because their non-convex inner landscapes cause the multi-start SLSQP to find local optima. Heat-Exchanger has a loose constraint set (93\% feasible) but the largest gain ($10^{8}\times$), driven by dimensionality reduction from 5D to 2D combined with a well-behaved inner landscape. The analysis confirms that when the inner optimizer converges reliably, dimensionality reduction alone produces large gains; exact constraint satisfaction provides additional benefit but is neither necessary nor sufficient.

\noindent\textbf{Sample efficiency.}
Beyond final regret, practitioners are interested in how quickly a method reaches useful solutions. Figure~\ref{fig:sample_crossover} reports the number of evaluations at which bilevel BO first matches the final regret that black-box BO achieves after all 250 evaluations. On 5 of 13 problems (Heat-Exchanger, Membrane, Williams-Otto, Distillation, CSTR), bilevel BO matches or surpasses black-box BO's final performance within the initial sampling phase alone---effectively from the first BO iteration. Even in the hardest case (Evaporator), crossover occurs at 47 evaluations (19\% of the total budget). This has direct practical implications: when each black-box evaluation costs hours of compute time, bilevel BO can deliver black-box BO-quality solutions using 80--100\% fewer expensive evaluations.

% Sample efficiency crossover
\begin{figure}[htbp]
    \centering
    \includegraphics[width=\textwidth]{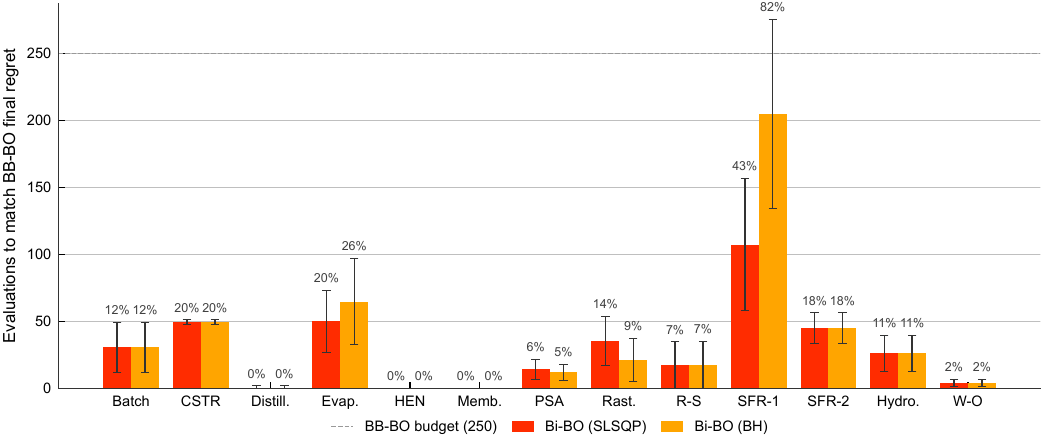}
    \caption{Sample efficiency crossover: number of evaluations for bilevel BO to match black-box BO's final regret (at 250 evaluations). Percentages show the fraction of the total budget required. On 10 of 13 problems, bilevel BO matches black-box BO's final performance within 10\% of the evaluation budget.}
    \label{fig:sample_crossover}
\end{figure}

\noindent\textbf{Connection to decomposition in global optimization.}
As discussed in Section~\ref{sec:related_work}, the bilevel reformulation belongs to a broader family of decomposition strategies---block coordinate descent~\cite{wright2015coordinate}, trust-region filter methods~\cite{eason2016trf,eason2018trf}, and data-driven bilevel solvers like DOMINO~\cite{beykal2020domino}. Our results provide empirical support for a principle from bilevel optimization theory~\cite{colson2007overview}: solving the inner problem exactly improves outer-level convergence. In the surrogate-based setting, the inner global optimizer eliminates noise from outer-level observations, giving the GP a cleaner signal. This suggests that bilevel decomposition may be beneficial whenever part of the optimization landscape admits an efficient exact solver, even beyond the grey-box problems studied here.

\noindent\textbf{Inner-solver choice: SLSQP vs.\ Basin-Hopping.}
The choice of inner-loop solver affects both solution quality and computational cost. Across all 13 problems, multi-start SLSQP and Basin-Hopping achieve comparable final regret (family-level sign test: SLSQP wins 6, BH wins 7, $p = 1.0$; geometric mean SLSQP/BH ratio: 1.52$\times$). However, the two solvers exhibit complementary strengths. BH excels on multimodal inner landscapes: on Rastrigin, Bi-BO (BH) achieves $7.2 \times 10^{7}\times$ improvement over BB-BO, while Bi-BO (SLSQP) achieves 698$\times$---a gap of five orders of magnitude attributable entirely to SLSQP's inability to escape local optima in the multimodal Rastrigin inner problem. Conversely, SLSQP excels on problems with narrow feasible regions: on SFR-1, Bi-BO (SLSQP) achieves 11$\times$ improvement while Bi-BO (BH) is 0.1$\times$ (\emph{worse} than BB-BO), because Basin-Hopping's random perturbations overshoot the small feasible set. SLSQP is also faster on all 13 problems (geometric mean 1.74$\times$), making it the recommended default. BH is preferred when the inner landscape is known or suspected to be multimodal.

\noindent\textbf{Complementarity with existing grey-box BO.}
As detailed in Section~\ref{sec:related_work}, COBALT~\cite{paulson2022cobalt} and BOCF~\cite{astudillo2019bocf} address settings our method does not: non-separable composite functions and risk-averse decisions via uncertainty propagation. Our method trades their generality for simplicity and exactness when Assumption~\ref{ass:separability} holds. A direct comparison on the same benchmark suite is planned for future work.

\noindent\textbf{Baseline constraint handling.}
The black-box BO baseline uses a fixed penalty ($10^6$) for constraint violations. More sophisticated constrained BO methods---feasibility-weighted expected improvement~\cite{gardner2014}, augmented Lagrangian approaches~\cite{gramacy2016}---would likely narrow the gap on tightly constrained problems such as Heat-Exchanger and Distillation, where the penalty method is least effective. However, the dimensionality reduction advantage of bilevel BO would persist regardless of constraint handling, as the surrogate dimension reduction from $n_{\WB}+n_{\BB}$ to $n_{\BB}$ is independent of the acquisition function. We intentionally use vanilla methods for both BO and NLP to isolate the effect of the bilevel structure rather than specific algorithmic choices.

\noindent\textbf{Limitations and open problems.}
We identify six limitations, each pointing to an open problem. (1)~The inner optimizer must converge to the global optimum (Assumption~\ref{ass:separability}(ii)). For non-convex inner problems, the inner solver may converge to local optima---as observed in Section~\ref{sec:results}, this explains the modest 11$\times$ improvement on SFR-1 (SLSQP variant) and the sub-100\% feasibility on Batch-Reactor and Evaporator (Figure~\ref{fig:feasibility_rates}). Replacing the stochastic inner solver with a deterministic global solver (e.g., BARON) would provide optimality certificates but at higher computational cost. (2)~The inner solver can perform \emph{worse} than the black-box baseline on problems with very narrow feasible regions: on SFR-1, the BH inner solver's random perturbations generate predominantly infeasible starting points, yielding a 0.1$\times$ ratio. Multi-start SLSQP avoids this failure mode on SFR-1 but lacks BH's global exploration for multimodal problems. (3)~Expensive inner optimizations can dominate wall time. With the BH inner solver, Williams-Otto requires 3{,}130s vs.\ 273s for BB-BO (11.5$\times$ overhead); even the faster SLSQP variant incurs a 1.19$\times$ overhead on this problem. Warm-starting the inner solver from previous solutions or using reduced-order inner models could mitigate this. (4)~Pure black-box constraints---those depending directly on $f^{\BB}$ outputs without passing through the white-box model---are not handled; extending the outer loop with constrained BO methods~\cite{gardner2014,gelbart2014unknown} would address this. (5)~All black-box functions in our suite are closed-form surrogates of what would be expensive computations; validation on a real DFT or molecular dynamics black box would strengthen the practical case. (6)~Formal convergence guarantees for the outer BO loop---which follow from standard results since it is BO over a compact domain with a GP surrogate---are not analyzed here; characterizing how the bilevel structure affects dimension-dependent regret bounds is future work.

\noindent\textbf{Broader implications.}
Expensive black-box optimization problems frequently contain exploitable structure---known physics, analytical submodels, or differentiable components---that monolithic surrogates ignore. The bilevel decomposition is one instance of a general principle: \emph{solve what you can solve exactly, and approximate only what you must}. As surrogate-based methods scale to higher-dimensional search spaces, structural decomposition becomes not merely helpful but necessary to overcome the curse of dimensionality. The 13-problem benchmark suite provides a testbed for developing and comparing methods that exploit this structure.

%% ============================================================================
%% SECTION 8: CONCLUSION
%% ============================================================================
\section{Conclusion}\label{sec:conclusion}

When grey-box optimization problems exhibit variable separability---the expensive black-box depends only on one variable group while the remaining variables enter known, differentiable equations---solving the known subproblem exactly and approximating only the black-box component yields 11$\times$--$10^{8}\times$ lower regret than monolithic surrogate-based optimization. The advantage is robust to hyperparameters and inner-solver choice, and requires no algorithmic novelty beyond the decomposition itself. A suite of 13 benchmark problems with verified global optima provides a standardized testbed for future methods that exploit this structure.

The opening observation of this paper---that monolithic surrogates waste samples learning what is already known---is nearly self-evident once the separable structure is recognized. Yet no prior method systematically exploits this structure for dimensionality reduction with exact constraint satisfaction (in principle; subject to inner-solver global optimality in practice). The empirical evidence across 8{,}450 independent optimization runs confirms that the payoff is large and consistent. As expensive black-box optimization scales to higher dimensions, identifying and exploiting known substructure will be essential; the bilevel decomposition studied here is one principled way to do so.

%% ============================================================================
%% BACK MATTER
%% ============================================================================
\section*{Acknowledgements}

Computational resources were provided by the Texas Advanced Computing Center (TACC) at The University of Texas at Austin.

\section*{Declarations}

\begin{itemize}
\item \textbf{Funding.} This work was supported by the ExxonMobil, whose financial support is acknowledged with gratitude.

\item \textbf{Conflict of interest.} The authors declare that they have no conflict of interest.

\item \textbf{Data availability.} All data generated during this study are reproducible from the code described below. No external datasets were used.

\item \textbf{Code availability.} The Python code for the benchmark suite, solvers, and experiment runner is available at \url{https://github.com/joshuaeh/HybridBayesianOptimization}.

\item \textbf{Author contributions.} Joshua E.\ Hammond: conceptualization, methodology, software, investigation, writing---original draft. Tyler A.\ Soderstrom: methodology, investigation, writing---review \& editing. Brian A.\ Korgel: supervision, writing---review \& editing. Michael Baldea: conceptualization, supervision, funding acquisition, writing---review \& editing.

\item \textbf{Use of generative AI.} Generative AI tools (Claude Opus 4.6, Anthropic) were used to assist with code development, data analysis, and manuscript drafting. All AI-generated content was reviewed, verified, and edited by the authors, who take full responsibility for the accuracy and integrity of the work. No AI tool was used to generate scientific claims, interpret results, or design experiments without human oversight.
\end{itemize}

%% ============================================================================
%% APPENDICES
%% ============================================================================
\begin{appendices}
\section{Test Problem Definitions}\label{sec:test_problems}

Full mathematical formulations for all 13 benchmark problems. Each problem specifies the white-box and black-box models, constraints, variable bounds, and verified global optimum.

\subsection{Small Feasible Region Problem 1}\label{subsec:app_sfr1}
Gardner et al.~\cite{gardner2014} and Ariafar et al.~\cite{Ariafar2019} propose a constrained optimization test problem with a small feasible region that can be formulated as follows:
\begin{align}
    \min\limits_{x} & \sin{x_1} + y_1, \\\notag
    \text{s.t.} &~~ g_1(x) := \sin(x_1)\sin(y_1) + 0.95 \leq 0, \\\notag
    &~~ y_1 = f^\BB(x) := x_2, \\\notag
    &~~ 0 \leq x_i \leq 6, ~~~~ \forall i \in \{1, 2\}
\end{align}

The global minimum is $J^\star = 0.2532$ and is located at $x^{\WB\star}_1=\frac{3\pi}{2}$, $x^{\BB\star}=x_2=1.2532$.

\subsection{Modified Small Feasible Region Problem 2}\label{subsec:app_sfr2}

Gardner et al.~\cite{gardner2014} and Ariafar et al.~\cite{Ariafar2019} propose a constrained optimization test problem with a small feasible region. We add an additional black-box function $y_1=f^\BB (x_2)$ that forms a grey-box problem as follows:
\begin{align} \label{eq:SmallFeasibleRegion}
    \min\limits_{x} & \cos(2x_1)\cos(y_1) + \sin(x_1), \\\notag
    \text{s.t.} &~~ g_1(x) := \cos(x_1)\cos(y_1) - \sin(x_1)\sin(y_1) - 0.5 \leq 0, \\\notag
    &~~ y_1 = f^\BB(x) := \frac{(x_2 - 5.5)}{2} \exp\left(\frac{x_2}{2}\right), \\\notag
    &~~ 0 \leq x_i \leq 6, ~~~~ \forall i \in \{1, 2\}
\end{align}

The global minimum $J^\star = -2.0$ is located at $x^{\BB\star}=5.5, x^{\WB\star}=3\pi/2$.

\subsection{Rastrigin}\label{subsec:app_rastrigin}

We use a modified formulation of the Rastrigin function~\cite{rastrigin1974} that can be formulated as a multi-scale grey-box problem as follows:
\begin{align} \label{eq:Rastrigin}
    \min\limits_{x, y} &~~ 30 + x_1^2 - 10\cos(2\pi x_1) + x_2^2 - 10\cos(2\pi x_2) + y_1, \\\notag
    \text{s.t.} &~~ y_1 = f^\BB(x) := x_3^2 - 10\cos( 2 \pi x_3), \\\notag
    &~~ -5.12 \leq x_i \leq 5.12, ~~~~~ \forall i \in \{ 1, 2,3 \}, \\\notag
    &~~ x = [x^{\WB}, x^{\BB}], ~~~~~ x^{\WB} = [x_1, x_2], ~~~~~ x^{\BB} = [x_3].
\end{align}
The global minimum is equal to $0$ with $x^{\WB\star} = [0, 0], x^{\BB\star} = [0]^\top$.

\subsection{Toy-Hydrology}\label{subsec:app_hydrology}

We use a modified formulation of the Toy Hydrology problem~\cite{gramacy2016} that can be formulated as a multi-scale grey-box problem as follows:
\begin{align} \label{eq:ToyHydrology}
    \min\limits_{x, y} &~~ x_1 + x_2, \\\notag
    \text{s.t.} &~~ g_1(x, y) := 1.5 - x_1 - 2x_2 - 0.5\sin(-4\pi x_2 + y_1) \leq 0, \\\notag
    &~~ g_2(x, y) := x_1^2 + x_2^2 - 1.5 \leq 0, \\\notag
    &~~ y_1 = f^\BB(x_1) := 2 \pi x_1^2, \\\notag
    &~~ 0 \leq x_i \leq 1, ~~~~~ \forall i \in \{ 1, 2 \}, \\\notag
    &~~ x = [x^{\WB}, x^{\BB}], ~~~~~ x^{\BB} = [x_1], ~~~~~ x^{\WB} = [x_2].
\end{align}
The global minimum is $0.5998$ with $x^{\WB\star} = 0.4047, x^{\BB\star} = 0.1951$.

\subsection{Rosen-Suzuki}\label{subsec:app_rosen_suzuki}

We use a modified formulation of the Rosen-Suzuki function~\cite{rosenbrock1960} that can be formulated as a multi-scale grey-box problem as follows:
\begin{align} \label{eq:RosenSuzuki}
    \min\limits_{x, y} &~~ x_1^2 + x_2^2 + x_4^2 - 5x_1 - 5x_2 + y_1, \\\notag
    \text{s.t.} &~~ g_1(x, y) := -(8 - x_1^2 - x_2^2 - x_3^2 - x_4^2 - x_1 + x_2 - x_3 + x_4) \leq 0, \\\notag
    &~~ g_2(x, y) := -(10 - x_1^2 - 2x_2^2 - y_2 + x_1 + x_4) \leq 0, \\\notag
    &~~ g_3(x, y) := -(5 - 2x_1^2 - x_2^2 - x_3^2 - 2x_1 + x_2 + x_4) \leq 0, \\\notag
    &~~ y_1 = f^\BB_1(z) := 2x_3^2 - 21x_3 + 7x_4, \\\notag
    &~~ y_2 = f^\BB_2(z) := x_3^2 + 2x_4^2, \\\notag
    &~~ -2 \leq x_i \leq 2, ~~~~~ \forall i \in \{ 1,\ldots, 4 \} \\\notag
    &~~ x = [x^{\WB}, x^{\BB}], ~~~~~ x^{\WB} = [x_1, x_2], ~~~~~ x^{\BB} = [x_3, x_4].
\end{align}
The global minimum is $-44$ with $x^{\WB\star} = [0, 1]^\top, x^{\BB\star} = [2, -1]^\top$.

\subsection{Catalytic Reactor Design}\label{subsec:app_cstr}
We use a catalytic reactor design problem that combines a white-box CSTR process model with black-box catalyst properties akin to those derived from DFT calculations. The objective is to \emph{maximize} the yield of intermediate product $B$ in consecutive reactions $A \to B \to C$. The problem can be formulated as a multi-scale grey-box problem as follows:
\begin{align} \label{eq:CSTR}
    \max\limits_{T, P, \tau, E_b, \Delta E_a} &~~ Y_B \\\notag
    \text{s.t.} &~~ \bff^\WB=\left[
        \begin{array}{l}
            Y_B-\frac{k_1 \tau}{\left(1+k_1 \tau\right)\left(1+k_2 \tau\right)} \\
            k_1\left(T, P, E_b, \Delta E_a\right)-A_1 v\left(E_b\right) \exp \left(-\frac{E_{a, 1}}{R T}\right)\left(\frac{P}{P_{\mathrm{ref}}}\right)^{\beta_1} \frac{1}{\left(1+K_{\mathrm{ads}} P\right)^2} \\
            k_2\left(T, P, E_b, \Delta E_a\right)-A_2 \exp \left(-\frac{E_{a, 2}}{R T}\right)\left(\frac{P}{P_{\mathrm{ref}}}\right)^{\beta_2} \frac{1}{\left(1+K_{\mathrm{ads}} P\right)^2} \\
            K_{\mathrm{ads}}\left(E_b\right) - K_0 \exp\left(\kappa \left(E_b - E_b^*\right)\right)
        \end{array}\right], \\\notag
    &~~ \bff^\BB=\left[
        \begin{array}{l}
            v\left(E_b\right)-\left[\exp \left(-\frac{1}{2}\left(\frac{E_b-E_b^*}{\sigma_E}\right)^2\right)\right] \\
            E_{a, 1}-\left[E_{a, 1}^{(0)}-\alpha_1 \Delta E_a\right] \\
            E_{a, 2}-\left[E_{a, 2}^{(0)}+\alpha_2 \Delta E_a\right]
        \end{array}\right], \\\notag
    &~~ g_1(x, y) := Y_C - 0.25 \leq 0, \\\notag
    &~~ g_2(x, y) := \frac{P}{P_{\max}} - \frac{T - T_{\min}}{T_{\max} - T_{\min}} - 0.3 \leq 0,
\end{align}

and the conversion and byproduct yield are:
\begin{align}\notag
    X_A &= \frac{k_1 \tau}{1 + k_1 \tau}, & Y_C &= X_A - Y_B.
\end{align}

The black-box function $f^{\BB}$ maps catalyst properties $\Delta E_a$, $E_b$ to kinetic parameters via a volcano-type activity model:

\begin{table}[htbp]
    \caption{Parameters for the catalytic reactor design problem.\label{tab:cstr_parameters}}
\begin{tabular}{llll}
    \toprule
    \bf{Symbol} & \bf{Description} & \bf{Range / Value} & \bf{Units} \\
    \midrule
    \multicolumn{4}{l}{Decision / design variables} \\
    \midrule
    $T$ & Temperature & [450, 700] & K \\
    $P$ & Pressure & [1, 20] & bar \\
    $\tau$ & Residence time & [0.1, 5.0] & s \\
    $E_b$ & Binding energy & [-2.0, 0.0] & eV \\
    $\Delta E_a$ & Activation-energy shift & [-0.2, 0.2] & - \\
    \midrule
    $A_1$ & Pre-exponential (step 1) & $1.0 \times 10^7$ & $\mathrm{s}^{-1}$ \\
    $A_2$ & Pre-exponential (step 2) & $3.0 \times 10^6$ & $\mathrm{s}^{-1}$ \\
    $E_{a 1}\left(\Delta E_a\right)$ & Activation energy & $80,000-20,000 \Delta E_a$ & J/mol \\
    $E_{a 2}\left(\Delta E_a\right)$ & Activation energy & $95,000+5,000 \Delta E_a$ & J/mol \\
    $P_{\text {ref }}$ & Reference pressure & 8.0 & bar \\
    $E_b^{\star}$ & Reference binding energy & -1.0 & eV \\
    $\sigma_E$ & Width parameter & 0.40 & eV \\
    $K_0$ & Adsorption rate pre-exponential factor & 0.06 & 1/bar \\
    $\kappa$ & Adsorption rate exponential term & 3.0 & $1 / \mathrm{eV}$ \\
    $R$ & Gas constant & 8.314 & J/(mol K) \\
    $k_B$ & Boltzmann constant & $8.617 \times 10^{-5}$ & eV/K \\
    $\alpha_1$ & & 20,000 & - \\
    $\alpha_2$ & & 5,000 & - \\
    $\beta_1$ & Exponent for pressure dependence & 1.0 & - \\
    $\beta_2$ & Exponent for pressure dependence & 0.35 & - \\
    \bottomrule
\end{tabular}
\end{table}
The global maximum yield given the parameters in Table~\ref{tab:cstr_parameters} is $92.32\%$ with $x^{\WB\star} = [647.3, 20.0, 5.0]^\top$, $x^{\BB\star} = [-1.047, 0.200]^\top$.

\subsection{Heat Exchanger with Novel Working Fluid}\label{subsec:app_heat_exchanger}

We consider the design of a heat exchanger system where the working fluid's thermodynamic properties (heat capacity $c_p$, viscosity $\mu$, and thermal conductivity $k$) depend on two molecular descriptors ($m_1$, $m_2$) in a black-box function. White-box equations govern the heat exchanger design and operation (heat transfer area $A$, mass flow rate $\dot{m}$, and log-mean temperature difference $\Delta T_{lm}$) with the goal of minimizing the total cost (capital + operating) while satisfying minimum heat duty~\cite{yee1990simultaneous}. Capital cost scales with area according to the six-tenths rule~\cite{peters1991plant} while operating cost is a function of pumping power which depends on fluid viscosity and mass flow rate according to the relation provided by~\cite{mizutani2003mathematical}. We use a simplified form for the overall heat transfer coefficient $U$ based the resistance model from Ravagnani and Caballero~\cite{ravagnani2007minlp}.

The problem can be formulated as:
\begin{align} \label{eq:HeatExchanger}
    \min\limits_{A, \dot{m}, \Delta T_{lm}, m_1, m_2} &~~ C_{\text{capital}} + C_{\text{operating}} \\\notag
    \text{s.t.} &~~ C_{\text{capital}} = 1000 \cdot {x^{\WB}_1}^{0.6}, \\\notag
    &~~ C_{\text{operating}} = 500 \cdot \frac{{x^{\WB}_2}^3 \cdot y_2}{x^{\WB}_1}, \\\notag
    &~~ U = \frac{y_3}{0.01 + 0.001/y_1}, \\\notag
    &~~ g_1(x, y) := 100 - U \cdot x^{\WB}_1 \cdot x^{\WB}_3 \leq 0, \\\notag
    &~~ g_2(x, y) := 4000 - \frac{4x^{\WB}_2}{\pi \cdot 0.05 \cdot y_2} \leq 0, \\\notag
    &~~ y_1 = f^\BB_1(x^{\BB}_1, x^{\BB}_2) := 2.0 + 0.5\sin\left(\frac{\pi x^{\BB}_1}{50}\right) + 0.3x^{\BB}_2, \\\notag
    &~~ y_2 = f^\BB_2(x^{\BB}_1, x^{\BB}_2) := 0.001 \cdot \exp\left(\frac{x^{\BB}_1 - 100}{50}\right) \cdot (1 + 0.5x^{\BB}_2), \\\notag
    &~~ y_3 = f^\BB_3(x^{\BB}_1, x^{\BB}_2) := 0.1 + 0.05\left(1 - \left(\frac{x^{\BB}_1 - 125}{75}\right)^2\right) + 0.02x^{\BB}_2, \\\notag
    &~~ A \in [1, 50], \quad \dot{m} \in [0.1, 5], \quad \Delta T_{lm} \in [5, 50], \\\notag
    &~~ m_1 \in [50, 200], \quad m_2 \in [0, 1], \\\notag
    &~~ x = [x^{\WB}, x^{\BB}], \quad x^{\WB} = [A, \dot{m}, \Delta T_{lm}], \quad x^{\BB} = [m_1, m_2],
\end{align}

\begin{table}[htbp]
\centering
\caption{Variables, physical meaning, bounds, and units (Heat exchanger)}
\label{tab:heat_exchanger_vars}
\begin{tabular}{@{}llccl@{}}
\toprule
Variable & Physical meaning & Lower & Upper & Units \\
\midrule
$x^{\WB}_1$ & Heat transfer area ($A$) & 1 & 50 & m$^2$ \\
$x^{\WB}_2$ & Mass flow rate ($\dot{m}$) & 0.1 & 5 & kg/s \\
$x^{\WB}_3$ & Log-mean temperature difference ($\Delta T_{lm}$) & 5 & 50 & K \\
\midrule
$x^{\BB}_1$ & Molecular descriptor ($m_1$) & 50 & 200 & (unitless) \\
$x^{\BB}_2$ & Molecular descriptor ($m_2$) & 0 & 1 & (unitless) \\
\midrule
$y_1$ & Heat capacity ($c_p$) & N/A & N/A & kJ/kg$\cdot$K \\
$y_2$ & Viscosity ($\mu$) & N/A & N/A & Pa$\cdot$s \\
$y_3$ & Thermal conductivity ($k$) & N/A & N/A & W/m$\cdot$K \\
\bottomrule
\end{tabular}
\end{table}

where $y_1$ is the heat capacity [kJ/kg$\cdot$K], $y_2$ is the viscosity [Pa$\cdot$s], and $y_3$ is the thermal conductivity [W/m$\cdot$K]. The constraint $g_1$ ensures the heat duty $Q = U \cdot A \cdot \Delta T_{lm} \geq 100$ kW is satisfied, and $g_2$ enforces turbulent flow with Reynolds number $\text{Re} \geq 4000$. The global minimum cost given the parameters in Table~\ref{tab:heat_exchanger_vars} is $1000.0$ with $x^{\WB\star} = [1.0, 0.1, 19.3]^\top$, $x^{\BB\star} = [50.0, 0.0]^\top$.

\subsection{Pressure Swing Adsorption with Novel Adsorbent}\label{subsec:app_psa}

We consider the design of a pressure swing adsorption (PSA) cycle for gas separation, a problem that naturally exhibits grey-box structure~\cite{boukouvala2017argonaut,agarwal2010superstructure,ruthven1994psa}. The adsorbent properties---maximum loading $q_{\max}$, adsorption equilibrium constant for the target component $K_A$, and equilibrium constant for the competing species $K_B$---are computed from molecular simulations~\cite{burns2020process,smit2008molecular} that serve as black-box functions of adsorbent structural parameters (pore size $\sigma$ and interaction energy $\epsilon$)~\cite{dubbeldam2019force}.

The white-box model captures the PSA cycle performance through Langmuir isotherm-based working capacity calculations~\cite{langmuir1918adsorption,wang2023adsorption}. The working capacity $\Delta q$ represents the difference in loading between high-pressure adsorption and low-pressure desorption steps~\cite{yang1987gas,ruthven1994psa}. Selectivity $\alpha = K_A/K_B$ determines separation performance~\cite{hasan2013cost}, while productivity is defined as working capacity divided by adsorption time~\cite{leperi2019surrogate}. The objective minimizes the trade-off between maximizing productivity and minimizing compression costs, where isothermal compression work scales with $\ln(P_H/P_L)$~\cite{smith2018thermodynamics}.

The problem can be formulated as:
\begin{align} \label{eq:PSA}
    \min\limits_{P_H, P_L, t_{\text{ads}}, \sigma, \epsilon} &~~ -\text{Prod} + 0.1 \cdot \ln\left(\frac{P_H}{P_L}\right) \\\notag
    \text{s.t.} &~~ \alpha = \frac{y_2}{y_3}, \\\notag
    &~~ \Delta q = y_1 \cdot \left(\frac{y_2 \cdot P_H}{1 + y_2 \cdot P_H} - \frac{y_2 \cdot P_L}{1 + y_2 \cdot P_L}\right), \\\notag
    &~~ \text{Prod} = \frac{\Delta q}{t_{\text{ads}}}, \\\notag
    &~~ g_1(x, y) := 3 - \alpha \leq 0, \\\notag
    &~~ g_2(x, y) := 0.95 - \frac{\alpha \cdot P_H}{1 + \alpha \cdot P_H} \leq 0, \\\notag
    &~~ y_1 = f^\BB_1(\sigma, \epsilon) := 5.0 \cdot \exp\left(-\frac{(\sigma - 6)^2}{4}\right) \cdot \left(1 + 0.1\epsilon\right), \\\notag
    &~~ y_2 = f^\BB_2(\sigma, \epsilon) := 0.5 \cdot \exp\left(\frac{\epsilon - 10}{5}\right) \cdot \left(1 - 0.05(\sigma - 5)^2\right), \\\notag
    &~~ y_3 = f^\BB_3(\sigma, \epsilon) := 0.1 \cdot \exp\left(\frac{\epsilon - 15}{8}\right) \cdot \left(1 + 0.03(\sigma - 7)^2\right), \\\notag
    &~~ P_H \in [2, 10], \quad P_L \in [0.1, 1], \quad t_{\text{ads}} \in [10, 120], \\\notag
    &~~ \sigma \in [3, 10], \quad \epsilon \in [5, 25], \\\notag
    &~~ x = [x^{\WB}, x^{\BB}], \quad x^{\WB} = [P_H, P_L, t_{\text{ads}}], \quad x^{\BB} = [\sigma, \epsilon],
\end{align}
The black-box functions $f^\BB$ model the adsorbent properties as functions of the Lennard-Jones parameters: pore size $\sigma$ [\AA] and interaction energy $\epsilon$ [kJ/mol]~\cite{dubbeldam2019force,smit2008molecular}. These relationships capture how molecular-scale adsorbent design affects macroscopic separation performance~\cite{burns2020process}.

\begin{table}[htbp]
\centering
\caption{Variables, physical meaning, bounds, and units (Pressure Swing Adsorption)}
\label{tab:psa_vars}
\begin{tabular}{@{}llll@{}}
\toprule
\textbf{Symbol} & \textbf{Description} & \textbf{Range / Value} & \textbf{Units} \\
\midrule
\multicolumn{4}{l}{White-box decision variables} \\
\midrule
$P_H$ & High (adsorption) pressure & [2, 10] & bar \\
$P_L$ & Low (desorption) pressure & [0.1, 1] & bar \\
$t_{\text{ads}}$ & Adsorption step time & [10, 120] & s \\
\midrule
\multicolumn{4}{l}{Black-box decision variables} \\
\midrule
$\sigma$ & Adsorbent pore size & [3, 10] & \AA \\
$\epsilon$ & Lennard-Jones interaction energy & [5, 25] & kJ/mol \\
\midrule
\multicolumn{4}{l}{Black-box outputs} \\
\midrule
$y_1 = q_{\max}$ & Maximum loading capacity & --- & mol/kg \\
$y_2 = K_A$ & Adsorption equilibrium constant (component A) & --- & 1/bar \\
$y_3 = K_B$ & Adsorption equilibrium constant (component B) & --- & 1/bar \\
\bottomrule
\end{tabular}
\end{table}

The constraint $g_1$ ensures minimum selectivity $\alpha \geq 3$ for effective separation~\cite{hasan2013cost,yang1987gas}, and $g_2$ enforces minimum product purity based on the Langmuir isotherm equilibrium~\cite{langmuir1918adsorption,ruthven1994psa}. The global minimum given the parameters in Table~\ref{tab:psa_vars} is $-0.6433$ with $x^{\WB\star} = [3.97, 0.1, 10.0]^\top$, $x^{\BB\star} = [6.04, 19.55]^\top$.

\subsection{Batch Reactor with Catalyst Optimization}\label{subsec:app_batch_reactor}

We consider the optimization of a batch reactor for consecutive reactions $A \to B \to C$, where the objective is to maximize the yield of the intermediate product $B$~\cite{parulekar1988yield}. This classic selectivity problem~\cite{luus1999towards} gains a multi-scale dimension when catalyst properties must be co-optimized with operating conditions.

The white-box model consists of standard Arrhenius kinetics~\cite{fogler2016elements} governing the batch reactor mass balances, which admit analytical solutions for first-order consecutive reactions. Temperature $T$, batch time $t_f$, and initial concentration $C_{A,0}$ are the white-box decision variables.

The black-box model implements a Sabatier volcano relationship~\cite{motagamwala2021microkinetic,ooka2021sabatier,wodrich2024microkinetic} that captures how catalyst activity depends on binding energy $E_b$. The volcano principle states that optimal catalysts bind reactants neither too strongly (limiting desorption) nor too weakly (limiting adsorption), with peak activity at an intermediate binding energy. The catalyst particle size $d$ provides an additional degree of freedom that affects the pre-exponential factor through available surface area.

The problem can be formulated as:
\begin{align} \label{eq:BatchReactor}
    \min\limits_{T, t_f, C_{A,0}, E_b, d} &~~ -C_B + 0.001 \cdot T \cdot t_f \\\notag
    \text{s.t.} &~~ k_1' = y_1 \cdot \exp\left(\frac{-2000}{T}\right), \\\notag
    &~~ k_2' = y_2 \cdot \exp\left(\frac{-2400}{T}\right), \\\notag
    &~~ C_A = C_{A,0} \cdot \exp(-k_1' \cdot t_f), \\\notag
    &~~ C_B = C_{A,0} \cdot \frac{k_1'}{k_2' - k_1'} \cdot \left(\exp(-k_1' \cdot t_f) - \exp(-k_2' \cdot t_f)\right), \\\notag
    &~~ g_1(x, y) := 0.8 - \left(1 - \frac{C_A}{C_{A,0}}\right) \leq 0, \\\notag
    &~~ g_2(x, y) := 0.7 - \frac{C_B}{C_{A,0} - C_A} \leq 0, \\\notag
    &~~ y_1 = f^\BB_1(E_b, d) := 10 \cdot d \cdot \exp\left(-\frac{(E_b + 1)^2}{0.25}\right), \\\notag
    &~~ y_2 = f^\BB_2(E_b, d) := 0.5 \cdot d \cdot \exp\left(-\frac{(E_b + 0.5)^2}{0.5}\right), \\\notag
    &~~ y_3 = f^\BB_3(E_b, d) := \exp\left(-2 \cdot E_b - 1\right), \quad \text{(not used in white-box equations)} \\\notag
    &~~ T \in [300, 500], \quad t_f \in [0.5, 10], \quad C_{A,0} \in [0.5, 5], \\\notag
    &~~ E_b \in [-2, 0], \quad d \in [0.5, 2], \\\notag
    &~~ x = [x^{\WB}, x^{\BB}], \quad x^{\WB} = [T, t_f, C_{A,0}], \quad x^{\BB} = [E_b, d],
\end{align}
The Gaussian functions in $f^\BB_1$ and $f^\BB_2$ encode the volcano-shaped activity curves: $k_1$ peaks at $E_b = -1$ eV while $k_2$ peaks at $E_b = -0.5$ eV, creating a selectivity landscape where catalyst design can favor the desired reaction over the consecutive side reaction. The output $y_3$ does not appear in any white-box equation or constraint; it is included intentionally to test robustness to irrelevant black-box outputs, since in practice molecular simulations often return quantities not all of which enter the process model.

\begin{table}[htbp]
\centering
\caption{Variables, physical meaning, bounds, and units (Batch Reactor)}
\label{tab:batch_reactor_vars}
\begin{tabular}{@{}llll@{}}
\toprule
\textbf{Symbol} & \textbf{Description} & \textbf{Range / Value} & \textbf{Units} \\
\midrule
\multicolumn{4}{l}{White-box decision variables} \\
\midrule
$T$ & Reactor temperature & [300, 500] & K \\
$t_f$ & Batch (final) time & [0.5, 10] & h \\
$C_{A,0}$ & Initial concentration of A & [0.5, 5] & mol/L \\
\midrule
\multicolumn{4}{l}{Black-box decision variables} \\
\midrule
$E_b$ & Catalyst binding energy & [$-2$, 0] & eV \\
$d$ & Catalyst particle size factor & [0.5, 2] & --- \\
\midrule
\multicolumn{4}{l}{Black-box outputs} \\
\midrule
$y_1 = k_1^0$ & Pre-exponential factor for $A \to B$ & --- & 1/s \\
$y_2 = k_2^0$ & Pre-exponential factor for $B \to C$ & --- & 1/s \\
$y_3 = K_{\text{eq}}$ & Adsorption equilibrium constant & --- & --- \\
\bottomrule
\end{tabular}
\end{table}

The constraint $g_1$ ensures minimum conversion $X_A \geq 0.8$, and $g_2$ enforces selectivity $S_B = C_B/(C_{A,0} - C_A) \geq 0.7$, requiring that at least 70\% of converted reactant forms the desired intermediate. The global minimum given the parameters in Table~\ref{tab:batch_reactor_vars} is $-1.7488$ with $x^{\WB\star} = [500.0, 4.394, 5.0]^\top$, $x^{\BB\star} = [-1.01, 2.0]^\top$.

\subsection{Extractive Distillation with Novel Solvent}\label{subsec:app_distillation}

We consider the integrated design of an extractive distillation column and entrainer selection, a canonical example of simultaneous solvent and process optimization~\cite{kossack2008systematic,scheffczyk2018cosmo}. The white-box model employs classical shortcut methods: the Fenske equation~\cite{fenske1932fractionation} determines minimum stages $N_{\min}$ from the relative volatility and separation specifications, the Underwood equations~\cite{underwood1948fractional,kamath2010equation} yield minimum reflux $R_{\min}$, and operating constraints follow from the Gilliland correlation~\cite{gilliland1940multicomponent,grossmann2005optimal}.

The black-box model represents COSMO-RS or other molecular thermodynamic predictions that map solvent molecular descriptors---here parameterized by Hansen solubility parameters for hydrogen bonding $\delta_H$ and polar interactions $\delta_P$---to the modified relative volatility $\alpha_{12}^{\text{mod}}$ in the presence of the entrainer~\cite{bruggemann2004shortcut}. The entrainer also affects operating costs through its density $\rho$ (determining column sizing) and viscosity $\mu$ (affecting tray hydraulics and heat transfer).

The problem can be formulated as:
\begin{align} \label{eq:Distillation}
    \min\limits_{R, N, S/F, \delta_H, \delta_P} &~~ 1000 \cdot N + 500 \cdot R + 200 \cdot (S/F) \cdot y_3 \\\notag
    \text{s.t.} &~~ R_{\min} = \frac{1}{y_1 - 1} \cdot \left(\frac{x_D}{x_F} - y_1 \cdot \frac{1 - x_D}{1 - x_F}\right), \\\notag
    &~~ N_{\min} = \frac{\ln\left(\frac{x_D(1-x_B)}{x_B(1-x_D)}\right)}{\ln(y_1)}, \\\notag
    &~~ g_1(x, y) := 1.2 \cdot R_{\min} - R \leq 0, \\\notag
    &~~ g_2(x, y) := 1.5 \cdot N_{\min} - N \leq 0, \\\notag
    &~~ g_3(x, y) := (S/F) \cdot y_2 - 5000 \leq 0, \\\notag
    &~~ y_1 = f^\BB_1(\delta_H, \delta_P) := 2.5 + 1.0 \cdot \sin\left(\frac{\pi(\delta_H - 10)}{20}\right) \cdot \cos\left(\frac{\pi(\delta_P - 10)}{15}\right), \\\notag
    &~~ y_2 = f^\BB_2(\delta_H, \delta_P) := 800 + 50 \cdot \delta_H - 20 \cdot \delta_P, \\\notag
    &~~ y_3 = f^\BB_3(\delta_H, \delta_P) := 0.5 + 0.1 \cdot \delta_H + 0.05 \cdot \delta_P^2 / 100, \\\notag
    &~~ R \in [1, 10], \quad N \in [5, 50], \quad S/F \in [0.5, 5], \\\notag
    &~~ \delta_H \in [5, 25], \quad \delta_P \in [5, 20], \\\notag
    &~~ x = [x^{\WB}, x^{\BB}], \quad x^{\WB} = [R, N, S/F], \quad x^{\BB} = [\delta_H, \delta_P],
\end{align}
where $x_D = 0.99$ (distillate purity), $x_F = 0.5$ (feed composition), and $x_B = 0.01$ (bottoms impurity).

\begin{table}[htbp]
\centering
\caption{Variables, physical meaning, bounds, and units (Extractive Distillation)}
\label{tab:distillation_vars}
\begin{tabular}{@{}llll@{}}
\toprule
\textbf{Symbol} & \textbf{Description} & \textbf{Range / Value} & \textbf{Units} \\
\midrule
\multicolumn{4}{l}{White-box decision variables} \\
\midrule
$R$ & Reflux ratio & [1, 10] & --- \\
$N$ & Number of theoretical stages & [5, 50] & --- \\
$S/F$ & Solvent-to-feed ratio & [0.5, 5] & --- \\
\midrule
\multicolumn{4}{l}{Black-box decision variables} \\
\midrule
$\delta_H$ & Hansen hydrogen-bonding parameter & [5, 25] & MPa$^{0.5}$ \\
$\delta_P$ & Hansen polar parameter & [5, 20] & MPa$^{0.5}$ \\
\midrule
\multicolumn{4}{l}{Black-box outputs} \\
\midrule
$y_1 = \alpha_{12}^{\text{mod}}$ & Modified relative volatility & --- & --- \\
$y_2 = \rho$ & Solvent density & --- & kg/m$^3$ \\
$y_3 = \mu$ & Solvent viscosity & --- & cP \\
\midrule
\multicolumn{4}{l}{Fixed parameters} \\
\midrule
$x_D$ & Distillate purity specification & 0.99 & mol/mol \\
$x_F$ & Feed composition & 0.50 & mol/mol \\
$x_B$ & Bottoms impurity specification & 0.01 & mol/mol \\
\bottomrule
\end{tabular}
\end{table}

The constraint $g_1$ ensures the reflux ratio exceeds the minimum by at least 20\% for operational flexibility, $g_2$ requires sufficient stages above the Fenske minimum, and $g_3$ limits solvent inventory based on density to control capital costs. The global minimum cost for the parameters in Table~\ref{tab:distillation_vars} is $11{,}758$ with $x^{\WB\star} = [1.0, 11.0, 0.5]^\top$, $x^{\BB\star} = [19.8, 9.99]^\top$.

\subsection{Multi-Effect Evaporator with Phase-Change Material}\label{subsec:app_evaporator}

We consider a prototype test problem for the design of a multi-effect evaporator (MEE) system integrated with a novel phase-change material (PCM) as the heat transfer medium. Multi-effect evaporator synthesis follows established energy balance formulations~\cite{hillenbrand1988synthesis}, where steam economy improves with additional effects but with diminishing returns due to tighter temperature driving forces. The efficiency degradation across effects is captured by the factor $0.9^{n-1}$, consistent with typical industrial values of 0.7--0.9 pounds of evaporation per pound of steam per effect~\cite{swenson2024mee}.

The white-box model includes energy balances relating the heat transfer fluid (HTF) flow rate and properties to evaporation capacity~\cite{ahmetovic2018simultaneous}. The latent heat of vaporization for water (2260 kJ/kg) determines the evaporation rate from available thermal energy. The number of effects $n$, HTF mass flow rate $\dot{m}_{\text{HTF}}$, and temperature drop per effect $\Delta T_{\text{effect}}$ are the white-box decision variables.

The black-box model represents molecular simulation or group-contribution predictions of PCM thermophysical properties~\cite{sharma2009review,zalba2003review} as functions of melting temperature $T_m$ and a molecular structure parameter $\lambda$ that controls the length of alkyl chains (affecting both latent heat and heat capacity). The key outputs are latent heat of fusion $\Delta H_{\text{fus}}$ (typical range 150--300 kJ/kg for organic PCMs), liquid heat capacity $c_p^{\text{liq}}$ (typically 2--3 kJ/(kg$\cdot$K)), and the actual melting temperature $T_{\text{melt}}$. The functional forms in $f^\BB$ are intended to capture qualitative structure-property relationships rather than precise physical models.

The problem can be formulated as:
\begin{align} \label{eq:Evaporator}
    \min\limits_{n, \dot{m}_{\text{HTF}}, \Delta T_{\text{effect}}, T_m, \lambda} &~~ -\dot{m}_{\text{evap}} + 0.1 \cdot n^2 + 0.01 \cdot \dot{m}_{\text{HTF}}^2 \\\notag
    \text{s.t.} &~~ Q_{\text{HTF}} = \dot{m}_{\text{HTF}} \cdot (y_1 + y_2 \cdot 10), \\\notag
    &~~ \dot{m}_{\text{evap}} = \frac{Q_{\text{HTF}} \cdot n \cdot 0.9^{n-1}}{2260}, \\\notag
    &~~ g_1(x, y) := 100 + n \cdot \Delta T_{\text{effect}} - y_3 \leq 0, \\\notag
    &~~ g_2(x, y) := 5 - \dot{m}_{\text{evap}} \leq 0, \\\notag
    &~~ y_1 = f^\BB_1(T_m, \lambda) := 150 \cdot \lambda \cdot \left(1 + 0.2 \sin\left(\frac{\pi T_m}{100}\right)\right), \\\notag
    &~~ y_2 = f^\BB_2(T_m, \lambda) := 2.0 + 0.5 \cdot \lambda - 0.002 \cdot T_m, \\\notag
    &~~ y_3 = f^\BB_3(T_m, \lambda) := T_m + 10 \cdot \sin\left(\frac{\pi \lambda}{2}\right), \\\notag
    &~~ n \in [2, 6] \text{ (treated as continuous; integer rounding would apply in practice)}, \\\notag
    &~~ \dot{m}_{\text{HTF}} \in [1, 20], \quad \Delta T_{\text{effect}} \in [5, 20], \\\notag
    &~~ T_m \in [50, 150], \quad \lambda \in [0.5, 2], \\\notag
    &~~ x = [x^{\WB}, x^{\BB}], \quad x^{\WB} = [n, \dot{m}_{\text{HTF}}, \Delta T_{\text{effect}}], \quad x^{\BB} = [T_m, \lambda],
\end{align}
\begin{table}[htbp]
\centering
\caption{Variables, physical meaning, bounds, and units (Multi-Effect Evaporator)}
\label{tab:evaporator_vars}
\begin{tabular}{@{}llll@{}}
\toprule
\textbf{Symbol} & \textbf{Description} & \textbf{Range / Value} & \textbf{Units} \\
\midrule
\multicolumn{4}{l}{White-box decision variables} \\
\midrule
$n$ & Number of evaporator effects & [2, 6] & --- \\
$\dot{m}_{\text{HTF}}$ & Heat transfer fluid mass flow rate & [1, 20] & kg/s \\
$\Delta T_{\text{effect}}$ & Temperature drop per effect & [5, 20] & $^\circ$C \\
\midrule
\multicolumn{4}{l}{Black-box decision variables} \\
\midrule
$T_m$ & Target melting temperature & [50, 150] & $^\circ$C \\
$\lambda$ & Molecular structure parameter & [0.5, 2] & --- \\
\midrule
\multicolumn{4}{l}{Black-box outputs} \\
\midrule
$y_1 = \Delta H_{\text{fus}}$ & Latent heat of fusion & --- & kJ/kg \\
$y_2 = c_p^{\text{liq}}$ & Liquid phase heat capacity & --- & kJ/(kg$\cdot$K) \\
$y_3 = T_{\text{melt}}$ & Actual melting temperature & --- & $^\circ$C \\
\midrule
\multicolumn{4}{l}{Fixed parameters} \\
\midrule
$\Delta H_{\text{vap,water}}$ & Latent heat of water vaporization & 2260 & kJ/kg \\
$\eta_{\text{effect}}$ & Per-effect efficiency factor & 0.9 & --- \\
\bottomrule
\end{tabular}
\end{table}

The constraint $g_1$ ensures temperature feasibility: the PCM melting temperature must exceed the cumulative temperature drop across all effects plus a 100$^\circ$C baseline. The constraint $g_2$ enforces a minimum evaporation rate of 5 kg/s to meet production requirements. The global minimum for parameters in Table~\ref{tab:evaporator_vars} is $-1.9587$ with $x^{\WB\star} = [4.09, 19.07, 5.0]^\top$, $x^{\BB\star} = [120.47, 2.0]^\top$.

\subsection{Membrane Separation with Designed Polymer}\label{subsec:app_membrane}

We consider the integrated design of a membrane gas separation process and the polymer membrane material~\cite{qi2000membrane}. This problem exemplifies the permeability-selectivity trade-off characterized by the Robeson upper bound~\cite{robeson1991correlation,robeson2008upper}, which establishes that polymeric membranes exhibit a fundamental inverse relationship between gas permeability and selectivity explained by free volume theory~\cite{freeman1999basis}.

The white-box model follows the solution-diffusion transport mechanism~\cite{wijmans1995solution}, where the permeate flux $J_A$ is proportional to permeability, pressure driving force, and inversely proportional to membrane thickness $\delta$. The membrane area $A_m$, thickness $\delta$, and transmembrane pressure difference $\Delta p$ are the white-box decision variables, with the objective of maximizing product recovery while minimizing capital (area) and operating (compression) costs.

The black-box model represents structure-property relationships from molecular simulations or machine learning predictions~\cite{barnett2020designing} that map polymer descriptors---fractional free volume (FFV) and chain spacing $d_{\text{spacing}}$---to transport properties. The Robeson trade-off is encoded in the black-box functions: increasing FFV enhances permeability but reduces selectivity, while chain spacing affects both the upper bound position and mechanical integrity. The mechanical strength constraint reflects the requirement that the membrane must withstand the transmembrane pressure difference during module operation~\cite{park2017maximizing}.

The problem can be formulated as:
\begin{align} \label{eq:Membrane}
    \min\limits_{A_m, \delta, \Delta p, \text{FFV}, d_{\text{spacing}}} &~~ -F_A \cdot y_2 + 0.1 \cdot A_m + 10 \cdot \Delta p \\\notag
    \text{s.t.} &~~ J_A = \frac{y_1 \cdot \Delta p}{\delta} \cdot 10^{-10}, \\\notag
    &~~ F_A = J_A \cdot A_m, \\\notag
    &~~ g_1(x, y) := 10^{-6} - J_A \leq 0, \\\notag
    &~~ g_2(x, y) := \frac{\Delta p \cdot A_m}{y_3} - 100 \leq 0, \\\notag
    &~~ y_1 = f^\BB_1(\text{FFV}, d_{\text{spacing}}) := 1000 \cdot \exp\left(\frac{\text{FFV} - 0.15}{0.05}\right) \cdot \left(1 + 0.1 \cdot d_{\text{spacing}}\right), \\\notag
    &~~ y_2 = f^\BB_2(\text{FFV}, d_{\text{spacing}}) := 50 \cdot \exp\left(-\frac{\text{FFV} - 0.15}{0.1}\right) \cdot \exp\left(-\frac{(d_{\text{spacing}} - 5)^2}{4}\right), \\\notag
    &~~ y_3 = f^\BB_3(\text{FFV}, d_{\text{spacing}}) := 100 \cdot (0.4 - \text{FFV}) \cdot (10 - d_{\text{spacing}}), \\\notag
    &~~ A_m \in [10, 1000], \quad \delta \in [0.1, 10], \quad \Delta p \in [1, 20], \\\notag
    &~~ \text{FFV} \in [0.1, 0.3], \quad d_{\text{spacing}} \in [3, 8], \\\notag
    &~~ x = [x^{\WB}, x^{\BB}], \quad x^{\WB} = [A_m, \delta, \Delta p], \quad x^{\BB} = [\text{FFV}, d_{\text{spacing}}],
\end{align}
Refer to the parameter definitions in Table~\ref{tab:membrane_vars}.

\begin{table}[htbp]
\centering
\caption{Variables, physical meaning, bounds, and units (Membrane Separation)}
\label{tab:membrane_vars}
\begin{tabular}{@{}llll@{}}
\toprule
\textbf{Symbol} & \textbf{Description} & \textbf{Range / Value} & \textbf{Units} \\
\midrule
\multicolumn{4}{l}{White-box decision variables} \\
\midrule
$A_m$ & Membrane area & [10, 1000] & m$^2$ \\
$\delta$ & Membrane thickness & [0.1, 10] & $\mu$m \\
$\Delta p$ & Transmembrane pressure difference & [1, 20] & bar \\
\midrule
\multicolumn{4}{l}{Black-box decision variables} \\
\midrule
FFV & Fractional free volume & [0.1, 0.3] & --- \\
$d_{\text{spacing}}$ & Polymer chain spacing & [3, 8] & \AA \\
\midrule
\multicolumn{4}{l}{Black-box outputs} \\
\midrule
$y_1 = P_A$ & Permeability of component A & --- & Barrer \\
$y_2 = \alpha_{A/B}$ & Selectivity (A over B) & --- & --- \\
$y_3 = \sigma$ & Mechanical strength proxy & --- & MPa \\
\bottomrule
\end{tabular}
\end{table}

The white-box flux equation $J_A = y_1 \cdot \Delta p / \delta$ follows directly from the solution-diffusion model~\cite{wijmans1995solution}, where permeate flux is proportional to permeability and pressure driving force, and inversely proportional to membrane thickness. The factor $10^{-10}$ converts from Barrer (the conventional permeability unit) to SI flux units.

The black-box functions are constructed to capture established structure-property relationships from polymer physics. The permeability function $y_1$ follows the exponential free volume dependence derived by Cohen and Turnbull~\cite{cohen1959molecular} and extended by Fujita~\cite{fujita1961diffusion}, who showed that diffusivity (and hence permeability) scales as $P \propto \exp(B \cdot \text{FFV})$ where $B$ is a polymer-penetrant parameter. This exponential form arises because molecular transport occurs through transient gaps in the polymer matrix, with the probability of sufficiently large gaps increasing exponentially with free volume fraction~\cite{matteucci2006transport}. The chain spacing term provides a secondary contribution reflecting the effect of interchain distance on diffusion pathways~\cite{lee1980selection}.

The selectivity function $y_2$ encodes the Robeson upper bound trade-off~\cite{robeson1991correlation,robeson2008upper}: as FFV increases, permeability rises but selectivity decreases because larger free volume elements become less size-discriminating. Freeman~\cite{freeman1999basis} provided a theoretical basis showing that the slope of the permeability-selectivity trade-off depends only on penetrant size ratios. The Gaussian dependence on $d_{\text{spacing}}$ reflects an optimal chain spacing for size-sieving selectivity~\cite{park2017maximizing}.

The mechanical strength proxy $y_3$ captures the physical principle that increased free volume and chain spacing reduce polymer density and chain entanglement, thereby weakening mechanical integrity~\cite{bondi1964van}. While the linear functional form is a simplification, it correctly represents the inverse relationship between transport-enhancing microstructure and mechanical robustness that constrains practical membrane design~\cite{chiwaye2025superstructure}.

The constraint $g_1$ ensures minimum flux $J_A \geq 10^{-6}$ mol/(m$^2\cdot$s) for practical separation rates, and $g_2$ enforces mechanical stability by limiting the pressure-area product relative to membrane strength.

The global optimum for parameters in Table~\ref{tab:membrane_vars} occurs at $x^{\WB*} = [A_m, \delta, \Delta p] = [10, 0.1, 1]$ and $x^{\BB*} = [\text{FFV}, d_{\text{spacing}}] = [0.3, 5.13]$, yielding $J^* = 10.997$. The optimizer drives all white-box variables to their lower bounds---minimizing membrane area (capital cost) and pressure difference (operating cost) while maximizing flux through minimum thickness---and selects maximum FFV for permeability while tuning chain spacing to optimize the selectivity-permeability trade-off near the Gaussian peak at $d_{\text{spacing}} \approx 5$~\AA.

\subsection{Williams-Otto Process}\label{subsec:app_williams_otto}

We consider the Williams-Otto process~\cite{williams1960,schmid2020}, a benchmark chemical process consisting of a continuously stirred tank reactor (CSTR) with three reactions:
\begin{align*}
    \text{A} + \text{B} &\to \text{C}, &
    \text{C} + \text{B} &\to \text{P} + \text{E}, &
    \text{P} + \text{C} &\to \text{G},
\end{align*}
where P is the desired product and G is waste. We formulate this as a multi-scale grey-box problem where catalyst properties (black-box from DFT calculations) affect kinetic selectivity, while the reactor mass balances at steady state are known white-box equations. The problem can be formulated as:
\begin{align} \label{eq:WilliamsOtto}
    \max\limits_{T, F_{fB}, \mu, \varepsilon, \sigma_c} &~~ F_{pP} \\\notag
    \text{s.t.} &~~ k_i(T) = \frac{a_i}{\rho} \exp\left(-\frac{b_i}{T}\right), \quad i \in \{1, 2, 3\}, \\\notag
    &~~ r_1 = y_1 \cdot k_1(T) \cdot X_A X_B, \quad r_2 = y_2 \cdot k_2(T) \cdot X_B X_C, \quad r_3 = k_3(T) \cdot X_C X_P, \\\notag
    &~~ 0 = F_{fA} - \mu X_A - r_1 \cdot m, \\\notag
    &~~ 0 = F_{fB} - \mu X_B - r_1 \cdot m - r_2 \cdot m, \\\notag
    &~~ 0 = -\mu X_C + 2r_1 \cdot m - 2r_2 \cdot m - r_3 \cdot m, \\\notag
    &~~ 0 = -\mu X_E + 2r_2 \cdot m, \\\notag
    &~~ 0 = -\mu X_P + r_2 \cdot m - 0.5 r_3 \cdot m, \\\notag
    &~~ 0 = -\mu X_G + 1.5 r_3 \cdot m, \\\notag
    &~~ F_{pP} = \mu X_P - 0.1 \mu X_E, \quad F_{wG} = \mu X_G, \\\notag
    &~~ g_1(x, y) := F_{wG} - 1.0 \leq 0, \\\notag
    &~~ g_2(x, y) := 0.5 - X_A - X_B - X_C - X_E - X_P - X_G \leq 0, \\\notag
    &~~ y_1 = f^\BB_1(\varepsilon, \sigma_c) := \exp\left(-\frac{(\varepsilon - 15)^2}{50}\right) \cdot \left(1 + 0.1(\sigma_c - 5)\right), \\\notag
    &~~ y_2 = f^\BB_2(\varepsilon, \sigma_c) := 1.5 \cdot \exp\left(-\frac{(\varepsilon - 12)^2}{40}\right) \cdot \exp\left(-\frac{(\sigma_c - 6)^2}{8}\right), \\\notag
    &~~ T \in [400, 700]~^\circ\text{R}, \quad F_{fB} \in [5, 50]~\text{klb/h}, \quad \mu \in [50, 200]~\text{klb/h}, \\\notag
    &~~ \varepsilon \in [5, 25]~\text{kJ/mol}, \quad \sigma_c \in [3, 10]~\text{\AA}, \\\notag
    &~~ x = [x^{\WB}, x^{\BB}], \quad x^{\WB} = [T, F_{fB}, \mu], \quad x^{\BB} = [\varepsilon, \sigma_c],
\end{align}
where $X_i = m_i/m$ are mass fractions and $m$ is total reactor mass. Refer to the parameters in Table~\ref{tab:williams_otto_vars}.

\begin{table}[htbp]
\centering
\caption{Variables, physical meaning, bounds, and units (Williams-Otto Process)}
\label{tab:williams_otto_vars}
\begin{tabular}{@{}llll@{}}
\toprule
\textbf{Symbol} & \textbf{Description} & \textbf{Range / Value} & \textbf{Units} \\
\midrule
\multicolumn{4}{l}{White-box decision variables} \\
\midrule
$T$ & Reactor temperature & [400, 700] & $^\circ$R \\
$F_{fB}$ & Feed flow rate of B & [5, 50] & klb/h \\
$\mu$ & Total outlet flow rate & [50, 200] & klb/h \\
\midrule
\multicolumn{4}{l}{Black-box decision variables} \\
\midrule
$\varepsilon$ & Catalyst surface energy & [5, 25] & kJ/mol \\
$\sigma_c$ & Catalyst pore size & [3, 10] & \AA \\
\midrule
\multicolumn{4}{l}{Black-box outputs} \\
\midrule
$y_1$ & Activity factor for reaction 1 ($A + B \to C$) & --- & --- \\
$y_2$ & Activity factor for reaction 2 ($C + B \to P + E$) & --- & --- \\
\midrule
\multicolumn{4}{l}{Fixed parameters} \\
\midrule
$a_1$ & Pre-exponential factor (reaction 1) & $5.9755 \times 10^9$ & h$^{-1}$ \\
$a_2$ & Pre-exponential factor (reaction 2) & $2.5962 \times 10^{12}$ & h$^{-1}$ \\
$a_3$ & Pre-exponential factor (reaction 3) & $9.6283 \times 10^{15}$ & h$^{-1}$ \\
$b_1$ & Activation temperature (reaction 1) & 12000 & $^\circ$R \\
$b_2$ & Activation temperature (reaction 2) & 15000 & $^\circ$R \\
$b_3$ & Activation temperature (reaction 3) & 20000 & $^\circ$R \\
$\rho$ & Density & 50 & lb/ft$^3$ \\
$F_{fA}$ & Feed flow rate of A (fixed) & 10 & klb/h \\
\bottomrule
\end{tabular}
\end{table}

The black-box outputs $y_1$ and $y_2$ represent catalyst activity factors for reactions 1 and 2, modeled as volcano-type functions of catalyst surface energy $\varepsilon$ and pore size $\sigma_c$. The constraint $g_1$ limits waste production ($F_{wG} \leq 1.0$ klb/h) and $g_2$ ensures mass balance closure ($\sum_i X_i \geq 0.5$). The global maximum net product flow is $5.470$ klb/h with $x^{\WB\star} = [581.9, 50.0, 50.0]^\top$, $x^{\BB\star} = [12.6, 6.11]^\top$.

%% ============================================================================
%% APPENDIX B: ACQUISITION FUNCTIONS
%% ============================================================================
\section{Acquisition Function Details}\label{sec:acquisition_functions}

All BO experiments use Expected Improvement (EI) as the acquisition function. We also implemented four alternatives for completeness; the EI results are reported in the main text because EI is the most widely used baseline and enables direct comparison with prior work.

\subsection{Expected Improvement}\label{subsec:app_ei}

For minimization, Expected Improvement is defined as
\begin{equation}\label{eq:ei}
    \alpha_\text{EI}(x) = (\hat{y} - \mu(x) - \xi)\,\Phi(Z) + \sigma(x)\,\phi(Z), \quad Z = \frac{\hat{y} - \mu(x) - \xi}{\sigma(x)},
\end{equation}
where $\mu(x)$ and $\sigma(x)$ are the GP posterior mean and standard deviation, $\hat{y}$ is the best observed objective value, $\Phi$ and $\phi$ are the standard normal CDF and PDF, and $\xi \geq 0$ is an exploration parameter. When $\sigma(x) = 0$, we set $\alpha_\text{EI}(x) = 0$. For maximization, the improvement direction is reversed: $Z = (\mu(x) - \hat{y} - \xi) / \sigma(x)$.

The exploration parameter $\xi$ controls the trade-off between exploitation ($\xi = 0$, selecting points where the GP mean is low) and exploration ($\xi > 0$, favoring regions of high uncertainty even if the mean is not promising). We sweep $\xi \in \{0.001, 0.01, 0.05, 0.1, 0.2, 0.5, 1.0\}$ in all experiments.

\subsection{Probability of Improvement}\label{subsec:app_pi}

Probability of Improvement measures the probability that a candidate point improves over the current best:
\begin{equation}\label{eq:pi}
    \alpha_\text{PI}(x) = \Phi(Z), \quad Z = \frac{\hat{y} - \mu(x) - \xi}{\sigma(x)}.
\end{equation}
PI is a simpler alternative to EI that ignores the magnitude of improvement. It tends to exploit more aggressively than EI for the same $\xi$.

\subsection{Lower Confidence Bound}\label{subsec:app_lcb}

The Lower Confidence Bound (LCB) acquisition function for minimization is
\begin{equation}\label{eq:lcb}
    \alpha_\text{LCB}(x) = -\bigl(\mu(x) - \kappa\,\sigma(x)\bigr),
\end{equation}
where $\kappa > 0$ controls exploration. We negate LCB so that all acquisition functions follow the convention that higher values are better (i.e., the next point is selected by $\arg\max_x \alpha(x)$). For maximization, the Upper Confidence Bound $\alpha_\text{UCB}(x) = \mu(x) + \kappa\,\sigma(x)$ is used. We set $\kappa = 2.0$ by default.

\subsection{Modified Watson-Barnes 2}\label{subsec:app_mwb2}

The modified Watson-Barnes 2 (mWB2) acquisition function was proposed in the COBALT framework~\cite{paulson2022cobalt}:
\begin{equation}\label{eq:mwb2}
    \alpha_\text{mWB2}(x) = s_n \cdot \alpha_\text{EI}(x) - \mu(x), \quad s_n = -3\left(1 - \frac{n}{N}\right),
\end{equation}
where $n$ is the current iteration and $N$ is the total budget. The dynamic scaling factor $s_n$ starts large (emphasizing EI-driven exploration) and decreases toward zero (shifting weight to the GP mean for exploitation). For maximization, the sign of $\mu(x)$ is reversed.

\subsection{Thompson Sampling}\label{subsec:app_thompson}

Thompson Sampling draws a sample function $\tilde{f}$ from the GP posterior and selects the point that optimizes the sample:
\begin{equation}\label{eq:thompson}
    x_\text{next} = \arg\min\limits_{x \in \mathcal{X}} \tilde{f}(x), \quad \tilde{f} \sim \mathcal{GP}(\mu, k).
\end{equation}
In practice, we evaluate $\tilde{f}$ at a discrete set of candidate points by drawing from the multivariate normal $\mathcal{N}(\mu(X_\text{grid}), K(X_\text{grid}, X_\text{grid}))$. Exploration arises naturally from the randomness of the posterior draw rather than from an explicit exploration parameter.

\subsection{Acquisition Optimization}\label{subsec:app_acq_opt}

For all acquisition functions, we maximize $\alpha(x)$ over a random grid of 1{,}000 candidate points drawn uniformly from the search domain (either $\mathcal{X}^{\BB}$ for bilevel BO or $\mathcal{X}^{\WB} \times \mathcal{X}^{\BB}$ for black-box BO). This grid search approach avoids the gradient-based optimization of the acquisition function that can be problematic in low dimensions with multi-modal acquisition landscapes. The candidate with the highest acquisition value is selected as the next evaluation point.

%% ============================================================================
%% APPENDIX C: GP CONFIGURATION DETAILS
%% ============================================================================
\section{Gaussian Process Configuration}\label{sec:gp_details}

Both BO solvers use \texttt{scikit-learn}'s \texttt{GaussianProcessRegressor} with identical configuration. The kernel is a product of a constant kernel and an ARD RBF kernel, plus additive white noise:
\begin{equation}\label{eq:gp_kernel_full}
    k(x, x') = \sigma_f^2 \prod_{d=1}^{D} \exp\!\left(-\frac{(x_d - x_d')^2}{2\ell_d^2}\right) + \sigma_n^2\,\delta(x, x'),
\end{equation}
where $D$ is the input dimension ($n_{\BB}$ for bilevel BO, $n_{\WB}+n_{\BB}$ for black-box BO), $\sigma_f^2$ is the signal variance, $\ell_d$ is the length scale for dimension $d$ (automatic relevance determination), and $\sigma_n^2$ is the noise variance.

Table~\ref{tab:gp_hyperparams} summarizes the hyperparameter bounds and optimization settings.

\begin{table}[htbp]
\centering
\caption{GP hyperparameter configuration.}
\label{tab:gp_hyperparams}
\begin{tabular}{@{}lll@{}}
\toprule
\textbf{Parameter} & \textbf{Value / Bounds} & \textbf{Notes} \\
\midrule
Signal variance $\sigma_f^2$ & $[10^{-10},\; 10^3]$ & Initial value 1.0 \\
Length scales $\ell_d$ & $[10^{-10},\; 10]$ & One per input dimension (ARD) \\
Noise variance $\sigma_n^2$ & $[10^{-10},\; 10^{-1}]$ & Initial value $10^{-5}$ \\
Regularization $\alpha$ & $10^{-6}$ & Added to diagonal for numerical stability \\
Normalize $y$ & True & Observations standardized before fitting \\
Optimizer restarts & 5 & L-BFGS-B on log marginal likelihood \\
\bottomrule
\end{tabular}
\end{table}

The bilevel BO surrogate fits a GP over $n_{\BB} = 1$--2 input dimensions, while the black-box BO surrogate fits over $n_{\WB}+n_{\BB} = 2$--5 dimensions. Since GP fitting scales as $O(N^3)$ in the number of observations $N$ (which is the same for both methods) and the per-prediction cost is $O(N^2 D)$, the lower dimensionality of bilevel BO reduces prediction cost but does not change the dominant $O(N^3)$ training cost. In practice, with $N \leq 250$ and $D \leq 5$, GP fitting is fast for both methods (typically $<$0.1 seconds per iteration).

%% ============================================================================
%% APPENDIX D: BH BASELINE DETAILS
%% ============================================================================
\section{BH Baseline Details}\label{sec:nlp_details}

The black-box BH baseline solves the full problem~\eqref{eq:full_problem} directly using SciPy's \texttt{basinhopping} over the full variable space $[x^{\WB}, x^{\BB}]$ with an SLSQP local minimizer, a Metropolis temperature of $T=100$, full-range stepsize (uniform perturbation clipped to variable bounds), and 1{,}000 Basin-Hopping iterations.

\subsection{Constraint Handling}\label{subsec:bh_constraints}

Constraints are handled by the SLSQP local minimizer within Basin-Hopping, which enforces $g(x^{\WB},\allowbreak y,\allowbreak x^{\BB}) \leq 0$ as inequality constraints at each local minimization. After optimization, feasibility is verified: solutions satisfying all constraints to within $10^{-6}$ are considered feasible. If no feasible solution is found, the solver returns the solution with minimum constraint violation.

In contrast, both BO methods track feasibility explicitly. For black-box BO, constraints are handled via a penalty of $10^6$ added to the objective when any $g_i(x) > 0$. For bilevel BO, the inner BH solver checks constraint satisfaction and returns a penalty value when the inner problem is infeasible for a given $y$.

\subsection{Sample Efficiency Comparison}\label{subsec:bh_sample_efficiency}

Table~\ref{tab:sample_efficiency} compares the number of black-box evaluations across solvers. The BH baseline evaluates $f^{\BB}$ at every local minimization within Basin-Hopping, resulting in substantially more $f^{\BB}$ evaluations than either BO method.

\begin{table}[htbp]
\centering
\caption{Black-box evaluations ($f^{\BB}$ calls) per solver.}
\label{tab:sample_efficiency}
\begin{tabular}{@{}lrl@{}}
\toprule
\textbf{Solver} & \textbf{$f^{\BB}$ evaluations} & \textbf{Notes} \\
\midrule
Black-box BH & iteration-dependent & Multiple $f^{\BB}$ calls per BH iteration \\
Black-box BO & $n_\text{init} + 200$ & One $f^{\BB}$ per iteration \\
Bilevel BO (SLSQP) & $n_\text{init} + 200$ & One $f^{\BB}$ per iteration \\
Bilevel BO (BH) & $n_\text{init} + 200$ & One $f^{\BB}$ per iteration \\
\bottomrule
\end{tabular}
\end{table}

All three BO methods call $f^{\BB}$ exactly once per iteration, so the total sample budget is $n_\text{init} + N$ where $N=200$ is the number of BO iterations. With $n_\text{init} = 5$ (the smallest setting), this gives 205 evaluations. Both bilevel BO variants additionally solve an inner optimization problem at each iteration, but this uses only the white-box model $f^{\WB}$ (which is cheap) and does not require additional calls to $f^{\BB}$.

\subsection{Failure Modes}\label{subsec:bh_failures}

The BH baseline fails on Rastrigin, where multimodality in the full combined space challenges even global solvers: the BH baseline achieves a final regret of 3.38, worse than all BO methods. It also performs poorly on Distillation, where the tight constraint set and complex objective landscape limit convergence. On all other problems, the BH baseline approaches the global optimum but requires substantially more $f^{\BB}$ evaluations than the BO methods.

%% ============================================================================
%% APPENDIX: INNER-LOOP AND FULL-SPACE SOLVER BENCHMARKS
%% ============================================================================
\section{Inner-Loop and Full-Space Solver Benchmarks}\label{sec:solver_benchmarks}

Basin-Hopping (BH) was selected as the unified global solver for both the inner-loop white-box subproblem and the full-space NLP baseline. To justify this choice, we benchmarked five solvers across all 13 problems in both contexts: multi-start SLSQP with 50 Latin hypercube restarts (SLSQP-50), differential evolution (DE)~\cite{storn1997differential}, simplicial homology global optimization (SHGO)~\cite{endres2018simplicial}, dual annealing (DA)~\cite{xiang1997generalized}, and Basin-Hopping with 100 iterations (BH). We additionally ran a high-iteration Basin-Hopping variant with 1{,}000 iterations (BH-1000) to assess convergence with extended computation.

\subsection{Inner-Loop Solver Comparison}\label{subsec:inner_solver}

Table~\ref{tab:inner_regret} reports regret (distance from verified global optimum) for each solver on the inner white-box subproblem, where $x^{\BB}$ is fixed and only $x^{\WB}$ is optimized. All five solvers have access to analytical gradients of $J$ and $g$ with respect to $x^{\WB}$.

\begin{table}[htbp]
\centering
\caption{Inner-loop solver regret. A dash (---) indicates the solver returned an infeasible solution. All feasible regret values are near machine precision ($<10^{-7}$) except for DA, which fails to converge on six problems.}
\label{tab:inner_regret}
\footnotesize
\begin{tabular}{@{}lcccccc@{}}
\toprule
\textbf{Problem} & \textbf{SLSQP-50} & \textbf{DE} & \textbf{SHGO} & \textbf{DA} & \textbf{BH} & \textbf{BH-1000} \\
\midrule
Small-Feas.-Region-1 & 4.4e-16 & --- & --- & 0 & 4.4e-16 & 4.4e-16 \\
Small-Feas.-Region-2 & 0 & 4.4e-16 & 0 & 0 & 0 & 0 \\
Rastrigin & 1.8e-05 & 0 & 0 & 3.6e-05 & 1.4e-04 & 0 \\
Toy-Hydrology & 2.7e-08 & 2.7e-08 & 2.7e-08 & --- & 2.7e-08 & 2.7e-08 \\
Rosen-Suzuki & 1.6e-14 & 1.2e-14 & 0 & --- & 4.0e-15 & 3.9e-15 \\
CSTR & 0 & 8.1e-14 & 0 & 4.5e-14 & 1.5e-16 & 0 \\
Heat-Exchanger & 0 & 6.7e-14 & 0 & 0 & 0 & 0 \\
PSA & 0 & 3.0e-14 & 1.7e-16 & 4.3e-09 & 0 & 0 \\
Batch-Reactor & 2.0e-12 & 2.1e-12 & 2.0e-12 & --- & 1.9e-12 & 2.0e-12 \\
Distillation & 8.2e-09 & 8.4e-08 & --- & --- & 8.5e-08 & 8.4e-08 \\
Evaporator & 1.0e-11 & 1.0e-11 & 1.0e-11 & --- & 1.0e-11 & 1.0e-11 \\
Membrane & 0 & 6.7e-14 & 0 & 0 & 0 & 0 \\
Williams-Otto & 2.7e-11 & 5.0e-13 & 4.3e-13 & --- & 4.3e-13 & 4.3e-13 \\
\midrule
Feasible & 13/13 & 12/13 & 11/13 & 7/13 & 13/13 & 13/13 \\
\bottomrule
\end{tabular}
\end{table}

SLSQP-50 achieves the lowest regret on 7 of 13 problems, leveraging gradient information most directly. However, it is inapplicable to the derivative-free full-space problem (Section~\ref{subsec:full_solver}). SHGO and DA suffer from feasibility failures: SHGO returns infeasible solutions on 2 problems (Small-Feasible-Region-1 and Distillation), while DA fails on 6 problems (Toy-Hydrology, Rosen-Suzuki, Batch-Reactor, Distillation, Evaporator, and Williams-Otto). DE is reliable (12/13 feasible) but misses feasibility on one problem. BH and BH-1000 both achieve feasibility on all 13 problems. The regret differences among feasible solvers are negligible in practice---on the order of $10^{-14}$ to $10^{-8}$---well below any meaningful engineering threshold.

Table~\ref{tab:inner_time} reports the corresponding wall times. SLSQP-50 and DA are the fastest (median $<0.1$\,s), while SHGO is the slowest (median 3.1\,s), with several problems exceeding 20\,s. BH is fast at 100 iterations (median 0.20\,s) and remains practical at 1{,}000 iterations (median 0.45\,s), except for Williams-Otto (30.9\,s) and Small-Feasible-Region-1 (27.0\,s).

\begin{table}[htbp]
\centering
\caption{Inner-loop solver wall time (seconds).}
\label{tab:inner_time}
\footnotesize
\begin{tabular}{@{}lcccccc@{}}
\toprule
\textbf{Problem} & \textbf{SLSQP-50} & \textbf{DE} & \textbf{SHGO} & \textbf{DA} & \textbf{BH} & \textbf{BH-1000} \\
\midrule
Small-Feas.-Region-1 & 1.694 & 1.158 & 0.488 & 0.024 & 0.314 & 27.049 \\
Small-Feas.-Region-2 & 0.008 & 0.015 & 0.977 & 0.025 & 0.097 & 0.167 \\
Rastrigin & 0.012 & 0.058 & 25.235 & 0.050 & 0.121 & 0.200 \\
Toy-Hydrology & 0.028 & 0.034 & 0.601 & 0.024 & 0.204 & 0.505 \\
Rosen-Suzuki & 0.040 & 0.196 & 0.628 & 0.046 & 0.208 & 0.930 \\
CSTR & 0.023 & 0.284 & 13.349 & 0.079 & 0.247 & 0.607 \\
Heat-Exchanger & 0.009 & 0.246 & 3.054 & 0.070 & 0.104 & 0.170 \\
PSA & 0.030 & 0.295 & 1.627 & 0.072 & 0.236 & 0.448 \\
Batch-Reactor & 0.067 & 0.523 & 3.965 & 0.075 & 0.180 & 1.093 \\
Distillation & 0.016 & 0.433 & 21.461 & 0.068 & 0.168 & 0.375 \\
Evaporator & 0.018 & 0.382 & 0.866 & 0.073 & 0.207 & 0.336 \\
Membrane & 0.017 & 0.444 & 22.310 & 0.070 & 0.124 & 0.275 \\
Williams-Otto & 1.749 & 6.279 & 4.502 & 1.088 & 3.099 & 30.904 \\
\midrule
Median & 0.023 & 0.295 & 3.054 & 0.070 & 0.204 & 0.448 \\
\bottomrule
\end{tabular}
\end{table}

\subsection{Full-Space Solver Comparison}\label{subsec:full_solver}

Table~\ref{tab:full_regret} reports regret when each solver is applied to the full joint problem over $[x^{\WB}, x^{\BB}]$. SLSQP-50 is inapplicable in this context because the black-box function $f^{\BB}$ does not provide analytical gradients.

\begin{table}[htbp]
\centering
\caption{Full-space solver regret. A dash (---) indicates the solver returned an infeasible solution or is inapplicable. BH (100 iterations) fails catastrophically on Rastrigin due to multimodality in the combined space, but BH-1000 recovers.}
\label{tab:full_regret}
\footnotesize
\begin{tabular}{@{}lcccccc@{}}
\toprule
\textbf{Problem} & \textbf{SLSQP-50} & \textbf{DE} & \textbf{SHGO} & \textbf{DA} & \textbf{BH} & \textbf{BH-1000} \\
\midrule
Small-Feas.-Region-1 & --- & 7.3e-09 & 7.3e-09 & --- & 7.3e-09 & 7.3e-09 \\
Small-Feas.-Region-2 & --- & 3.8e-15 & 2.2e-16 & 3.6e-13 & 6.7e-16 & 0 \\
Rastrigin & --- & 0 & --- & 7.5e-04 & 9.9e+09 & 0 \\
Toy-Hydrology & --- & 2.7e-08 & --- & --- & 2.7e-08 & 2.7e-08 \\
Rosen-Suzuki & --- & 1.0e-13 & 1.3e-12 & --- & 3.1e-15 & 3.7e-15 \\
CSTR & --- & 1.5e-13 & --- & 2.0e-12 & 3.1e-16 & 0 \\
Heat-Exchanger & --- & 1.4e-13 & --- & --- & 0 & 0 \\
PSA & --- & 1.7e-13 & --- & --- & 0 & 0 \\
Batch-Reactor & --- & 2.2e-12 & 1.7e-12 & --- & 1.9e-12 & 1.9e-12 \\
Distillation & --- & 8.4e-08 & --- & --- & 8.6e-08 & 8.5e-08 \\
Evaporator & --- & 1.0e-11 & 9.8e-12 & --- & 9.8e-12 & 9.8e-12 \\
Membrane & --- & 2.5e-13 & --- & 1.8e-10 & 0 & 0 \\
Williams-Otto & --- & 5.9e-13 & 3.1e-13 & --- & 4.4e-13 & 4.4e-13 \\
\midrule
Feasible & 0/13 & 13/13 & 6/13 & 4/13 & 13/13 & 13/13 \\
\bottomrule
\end{tabular}
\end{table}

The full-space results exhibit sharper solver differentiation. SHGO fails on 7 of 13 problems, with 5 hitting the 60-second timeout on problems with higher dimensionality or tight constraints (Rastrigin, CSTR, Heat-Exchanger, PSA, Distillation, Membrane). DA fails on 9 of 13 problems and returns large-regret solutions even when feasible (e.g., $3.9 \times 10^6$ on PSA). DE is fully reliable (13/13 feasible) with uniformly low regret, but is the slowest among the feasible solvers.

BH achieves feasibility on all 13 problems with near-zero regret on 12 of them. The exception is Rastrigin, where BH at 100 iterations converges to a poor local optimum with a regret of $9.9 \times 10^9$. This failure is resolved by increasing to 1{,}000 iterations (BH-1000), which achieves zero regret. This is the only problem where BH requires more than 100 iterations to find the global optimum in the full space.

\subsection{Solver Selection Rationale}\label{subsec:solver_rationale}

Table~\ref{tab:solver_summary} summarizes the aggregate performance. BH is the only solver that achieves feasibility on all 13 problems in both contexts while also being applicable to derivative-free problems. Although SLSQP-50 wins on 7 of 13 inner-loop problems, it cannot be used for the full-space baseline. DE is the next most reliable solver (13/13 feasible in both contexts) but is slower (median 0.97\,s vs.\ 0.39\,s for BH in the full space). SHGO and DA are unsuitable as general-purpose solvers due to frequent feasibility failures.

\begin{table}[htbp]
\centering
\caption{Aggregate solver statistics across all 13 benchmark problems. Feasibility counts and median regret are computed over feasible solutions only. BH provides the best combination of universal feasibility, near-exact optimality, and broad applicability.}
\label{tab:solver_summary}
\footnotesize
\begin{tabular}{@{}lccccc@{}}
\toprule
& \multicolumn{2}{c}{\textbf{Inner-loop}} & \multicolumn{2}{c}{\textbf{Full-space}} & \\
\cmidrule(lr){2-3} \cmidrule(lr){4-5}
\textbf{Solver} & \textbf{Feasible} & \textbf{Med.\ regret} & \textbf{Feasible} & \textbf{Med.\ regret} & \textbf{Grad.-free} \\
\midrule
SLSQP-50 & 13/13 & 1.6e-14 & --- & --- & No \\
DE & 12/13 & 7.4e-14 & 13/13 & 2.5e-13 & Yes \\
SHGO & 11/13 & 0 & 6/13 & 1.5e-12 & Yes \\
DA & 7/13 & 0 & 4/13 & 9.2e-11 & Yes \\
BH & 13/13 & 4.0e-15 & 13/13 & 4.4e-13 & Yes \\
BH-1000 & 13/13 & 4.4e-16 & 13/13 & 3.7e-15 & Yes \\
\bottomrule
\end{tabular}
\end{table}

The regret gap between BH and the best solver on any given problem is negligible---typically $10^{-14}$ to $10^{-8}$---confirming that the choice of BH does not compromise solution quality. Because BH wraps an SLSQP local minimizer inside a global exploration loop, it inherits gradient-based precision on the inner problem while providing the global search capability needed for the full-space baseline. This combination of universal feasibility, near-exact optimality, and applicability to both derivative-available and derivative-free contexts makes BH the most defensible single-solver choice for the benchmark suite.

%% ============================================================================
%% APPENDIX E: GLOBAL OPTIMUM VERIFICATION
%% ============================================================================
\section{Global Optimum Verification}\label{sec:optimum_verification}

Each problem's global optimum $J^\star$ was verified through a multi-stage process designed to provide high confidence that the reported optima are correct.

\subsection{Verification Procedure}\label{subsec:verification_procedure}

We used two complementary approaches, each repeated across 10 independent random seeds. First, we ran Basin-Hopping (BH) over the full variable space $[x^{\WB}, x^{\BB}]$ with an SLSQP local minimizer, a Metropolis temperature of $T=100$, full-range stepsize, and 2{,}000 BH iterations. Constraints were enforced by the SLSQP local minimizer. Second, we ran multi-start SLSQP with 500 Latin hypercube starting points per seed.

For each problem, we compared the best feasible solutions across all 20 runs (10 BH + 10 multi-start NLP). All 13 problems achieved agreement to within $2 \times 10^{-4}$ relative error, confirming the reported optima. For constrained problems, we verified that the optimal solution satisfies all inequality constraints $g_i(x^{\WB\star}, y^\star) \leq 0$ to within a tolerance of $10^{-6}$.

\subsection{Verified Optima}\label{subsec:verified_optima}

Table~\ref{tab:verified_optima} reports the verified global optimum for each problem, along with the optimal decision variables.

\begin{table}[htbp]
\centering
\caption{Verified global optima for all 13 benchmark problems. Optima verified via Basin-Hopping (SLSQP local minimizer, $T=100$, 2000 iterations, 10 seeds) and multi-start SLSQP (500 Latin hypercube starts, 10 seeds).}
\label{tab:verified_optima}
\begin{tabular}{@{}lrll@{}}
\toprule
\textbf{Problem} & $J^\star$ & $x^{\WB\star}$ & $x^{\BB\star}$ \\
\midrule
Small-Feasible-Region 1 & 0.2532 & $[4.712]$ & $[1.253]$ \\
Small-Feasible-Region 2 & $-2.000$ & $[4.712]$ & $[5.500]$ \\
Rastrigin & 0.0000 & $[0, 0]$ & $[0]$ \\
Toy-Hydrology & 0.5998 & $[0.405]$ & $[0.195]$ \\
Rosen-Suzuki & $-44.00$ & $[0, 1]$ & $[2, -1]$ \\
\midrule
CSTR & 92.32\% & $[647.3,\; 20.0,\; 5.0]$ & $[-1.047,\; 0.200]$ \\
Heat-Exchanger & 1000.0 & $[1.0,\; 0.1,\; 19.3]$ & $[50.0,\; 0.0]$ \\
PSA & $-0.6433$ & $[3.97,\; 0.1,\; 10.0]$ & $[6.04,\; 19.6]$ \\
Batch-Reactor & $-1.749$ & $[500,\; 4.39,\; 5.0]$ & $[-1.01,\; 2.00]$ \\
Distillation & 11{,}758 & $[1.0,\; 11.0,\; 0.5]$ & $[19.8,\; 9.99]$ \\
Evaporator & $-1.959$ & $[4.09,\; 19.1,\; 5.0]$ & $[120.5,\; 2.0]$ \\
Membrane & 11.00 & $[10.0,\; 0.1,\; 1.0]$ & $[0.30,\; 5.13]$ \\
Williams-Otto & 5.470 & $[581.9,\; 50.0,\; 50.0]$ & $[12.6,\; 6.11]$ \\
\bottomrule
\end{tabular}
\end{table}

%% ============================================================================
%% APPENDIX F: PROOF OF PROPOSITION 1
%% ============================================================================
\section{Proof of Proposition~\ref{prop:dimensionality}}\label{sec:proof_dimensionality}

\begin{proof}
We show that $\inf(\text{bilevel}) = \inf(\text{original})$ by constructing mappings in both directions.

\emph{(i) Any feasible point of~\eqref{eq:full_problem} maps to a feasible outer-level point with equal or worse objective.}
Let $(x^{\WB}, x^{\BB})$ be feasible for~\eqref{eq:full_problem} with $y = f^{\BB}(x^{\BB})$. By Assumption~\ref{ass:separability}(ii), the inner problem~\eqref{eq:inner_obj}--\eqref{eq:inner_bounds} at this $x^{\BB}$ and $y$ has a global solution $x^{\WB\star}$ satisfying $J(x^{\WB\star},\allowbreak y,\allowbreak x^{\BB}) \leq J(x^{\WB},\allowbreak y,\allowbreak x^{\BB})$. Thus $x^{\BB}$ is outer-feasible with objective value no worse than the original point.

\emph{(ii) Any outer-level feasible point maps back to a feasible point of~\eqref{eq:full_problem} with equal objective.}
Let $\tilde{x}^{\BB}$ be outer-feasible with $\tilde{y} = f^{\BB}(\tilde{x}^{\BB})$ and inner solution $\tilde{x}^{\WB\star}$. By construction, $\tilde{x}^{\WB\star}$ satisfies $f^{\WB}(\tilde{x}^{\WB\star},\allowbreak \tilde{y}) = 0$, $g(\tilde{x}^{\WB\star},\allowbreak \tilde{y},\allowbreak \tilde{x}^{\BB}) \leq 0$, and $\tilde{x}^{\WB\star} \in \mathcal{X}^{\WB}$. Together with $\tilde{x}^{\BB} \in \mathcal{X}^{\BB}$, the pair $(\tilde{x}^{\WB\star}, \tilde{x}^{\BB})$ is feasible for~\eqref{eq:full_problem} with objective $J(\tilde{x}^{\WB\star}, \tilde{y}, \tilde{x}^{\BB})$.

\emph{(iii) Infeasibility.}
If the inner problem is infeasible for some $\tilde{y} \in \text{range}(f^{\BB})$---i.e., no $x^{\WB} \in \mathcal{X}^{\WB}$ satisfies $f^{\WB}(x^{\WB},\allowbreak \tilde{y}) = 0$ and $g(x^{\WB},\allowbreak \tilde{y},\allowbreak \tilde{x}^{\BB}) \leq 0$---then no feasible point of~\eqref{eq:full_problem} exists at this $\tilde{x}^{\BB}$ either, so excluding it from the outer search does not lose optimality.

From (i) and (ii), the feasible sets project onto each other with preserved objectives, so $\inf(\text{bilevel}) = \inf(\text{original})$. To see how a global optimum of~\eqref{eq:full_problem} appears in~\eqref{eq:bilevel}: let $(x^{\WB\star\star}, x^{\BB\star\star})$ be a global minimizer of~\eqref{eq:full_problem} with $y^{\star\star} = f^{\BB}(x^{\BB\star\star})$. By step (i), $x^{\BB\star\star}$ is outer-feasible in~\eqref{eq:bilevel}, and the inner problem at $x^{\BB\star\star}$ returns some $\hat{x}^{\WB}$ with $J(\hat{x}^{\WB},\allowbreak y^{\star\star},\allowbreak x^{\BB\star\star}) \leq J(x^{\WB\star\star},\allowbreak y^{\star\star},\allowbreak x^{\BB\star\star})$. Since $(x^{\WB\star\star}, x^{\BB\star\star})$ is already globally optimal for~\eqref{eq:full_problem}, the inequality must hold with equality: $\hat{x}^{\WB} = x^{\WB\star\star}$ (up to alternative optima with the same objective value). Thus the bilevel formulation recovers both the optimal $x^{\BB\star\star}$ at the outer level and the optimal $x^{\WB\star\star}$ at the inner level.

Assumption~\ref{ass:separability}(ii) is used in step (i) to ensure the inner global optimum is attained, and in step (ii) to ensure the inner solution is globally optimal (not merely a local minimum). The surrogate in Algorithm~\ref{alg:bilevel_bo} models the mapping~\eqref{eq:value_function}, which is a function of $n_{\BB}$ variables.
\end{proof}

%% ============================================================================
%% APPENDIX G: IMPLEMENTATION DETAILS
%% ============================================================================
\section{Implementation Details}\label{sec:implementation_details}

\emph{Infeasibility handling.} If the inner problem is infeasible for some $y$---because the black-box outputs lead to a white-box problem with no feasible solution---we return a large penalty value. The acquisition function naturally steers away from these regions without requiring explicit constraint modeling at the outer level.

\emph{Modularity.} The inner optimizer can be swapped (Basin-Hopping, SLSQP, IPOPT, BARON) without changing the outer BO framework. Similarly, any acquisition function (EI, PI, LCB) can be used in the outer loop.

%% ============================================================================
%% APPENDIX H: CONVERGENCE SPEED
%% ============================================================================
\section{Convergence Speed Details}\label{sec:convergence_speed_details}

\begin{table}[htbp]
\centering
\caption{Median iterations to reach 1\% of initial regret for Bi-BO (SLSQP) ($n_\text{init}=50$, best $\xi$, 10 repetitions). ``$>$200'' means the target was not reached within 200 iterations. Speedup values marked with $\geq$ are lower bounds (BB-BO did not converge).}
\label{tab:convergence_speed}
\begin{tabular}{@{}lrrr@{}}
\toprule
\textbf{Problem} & \textbf{BB-BO} & \textbf{Bi-BO (SL)} & \textbf{Speedup} \\
\midrule
Small-Feasible-Region   & $>$200 & $>$200 & ---       \\
Small-Feasible-Region-2 & $>$200 & 50     & $\geq$4.0$\times$  \\
Rastrigin               & $>$200 & 55     & $\geq$3.6$\times$  \\
Toy-Hydrology           & $>$200 & 59     & $\geq$3.4$\times$  \\
Rosen-Suzuki            & $>$200 & 134    & $\geq$1.5$\times$  \\
\midrule
CSTR                    & 87     & 50     & 1.8$\times$  \\
Heat-Exchanger          & 151    & 7      & 20.1$\times$ \\
PSA                     & $>$200 & 134    & $\geq$1.5$\times$  \\
Batch-Reactor           & $>$200 & $>$200 & ---       \\
Distillation            & $>$200 & 50     & $\geq$4.0$\times$  \\
Evaporator              & $>$200 & $>$200 & ---       \\
Membrane                & $>$200 & $>$200 & ---       \\
Williams-Otto           & $>$200 & 50     & $\geq$4.0$\times$  \\
\bottomrule
\end{tabular}
\end{table}

\begin{table}[htbp]
\centering
\caption{Post-initialization BO iterations required for Bi-BO (SLSQP) to reach BB-BO's median final regret, and vice versa ($n_\text{init}=50$, 200-iteration BO budget, 10 repetitions, best $\xi$ per problem per method). An iteration count of $0$ means the target is reached at the end of the initial sampling phase. ``$>$200'' indicates the target is never reached within the budget in the majority of repetitions. Bi-BO's NLP-assisted initial samples already dominate BB-BO's full-budget final on 11 of 13 problems.}
\label{tab:convergence_iterations}
\resizebox{\textwidth}{!}{%
\begin{tabular}{@{}lrrrrr@{}}
\toprule
\textbf{Problem} & \textbf{BB-BO final} & \textbf{Bi-BO final} & \textbf{Bi-BO iters $\to$ BB-BO} & \textbf{Iters saved} & \textbf{BB-BO iters $\to$ Bi-BO} \\
\midrule
Small-Feasible-Region   & 4.77$\times 10^{-3}$ & 4.36$\times 10^{-4}$ & 54 & $+146$ & $>$200 \\
Small-Feasible-Region-2 & 2.43$\times 10^{-2}$ & 3.09$\times 10^{-6}$ & 1  & $+199$ & $>$200 \\
Rastrigin               & 1.85                 & 2.65$\times 10^{-3}$ & 0  & $+200$ & $>$200 \\
Toy-Hydrology           & 1.14$\times 10^{-2}$ & 1.05$\times 10^{-6}$ & 0  & $+200$ & $>$200 \\
Rosen-Suzuki            & 4.88                 & 2.43$\times 10^{-2}$ & 0  & $+200$ & $>$200 \\
CSTR                    & 3.38$\times 10^{-1}$ & 1.87$\times 10^{-3}$ & 1  & $+199$ & $>$200 \\
Heat-Exchanger          & 3.71$\times 10^{1}$  & 3.64$\times 10^{-7}$ & 0  & $+200$ & $>$200 \\
PSA                     & 1.04$\times 10^{-1}$ & 2.99$\times 10^{-3}$ & 0  & $+200$ & $>$200 \\
Batch-Reactor           & 2.89$\times 10^{-1}$ & 4.63$\times 10^{-4}$ & 0  & $+200$ & $>$200 \\
Distillation            & 3.62$\times 10^{3}$  & 3.37$\times 10^{-3}$ & 0  & $+200$ & $>$200 \\
Evaporator              & 3.84$\times 10^{-1}$ & 1.45$\times 10^{-2}$ & 1  & $+199$ & $>$200 \\
Membrane                & 1.24                 & 4.28$\times 10^{-5}$ & 0  & $+200$ & $>$200 \\
Williams-Otto           & 8.25$\times 10^{-1}$ & 7.94$\times 10^{-6}$ & 0  & $+200$ & $>$200 \\
\bottomrule
\end{tabular}}
\end{table}

%% ============================================================================
%% APPENDIX I: HYPERPARAMETER SENSITIVITY DETAILS
%% ============================================================================
\section{Hyperparameter Sensitivity Details}\label{sec:hyperparameter_details}

We sweep over two key BO hyperparameters: the number of initial samples $n_\text{init} \in \{5, 25, 50\}$ drawn from a Latin hypercube design, and the EI exploration parameter $\xi \in \{0.001, 0.01, 0.05, 0.1, 0.2, 0.5, 1.0\}$. Each configuration is repeated 10 times with independent seeds 42--51 to quantify variability. Every BO run uses 200 iterations after initialization. In total, the experiment comprises 8{,}450 independent optimization runs: $13 \times 3 \times 7 \times 10 \times 3 = 8{,}190$ BO runs (one black-box and two bilevel variants) plus $13 \times 10 = 130$ NLP runs and $13 \times 10 = 130$ full-space BH runs (one each per problem-seed combination, since NLP and BH results do not depend on $n_\text{init}$ or $\xi$).

\begin{table}[htbp]
\centering
\caption{Bi-BO (SLSQP) mean final regret across $n_\text{init}$ values (best $\xi$ per problem per $n_\text{init}$, 10 repetitions).}
\label{tab:ninit_sensitivity}
\begin{tabular}{@{}lrrr@{}}
\toprule
\textbf{Problem} & $n_\text{init}=5$ & $n_\text{init}=25$ & $n_\text{init}=50$ \\
\midrule
Small-Feasible-Region   & 0.0014 & 0.0011 & 0.0004 \\
Small-Feasible-Region-2 & 0.0000 & 0.0000 & 0.0000 \\
Rastrigin               & 0.0077 & 0.0063 & 0.0026 \\
Toy-Hydrology           & 0.0000 & 0.0000 & 0.0000 \\
Rosen-Suzuki            & 0.0078 & 0.0607 & 0.0243 \\
\midrule
CSTR                    & 0.0015 & 0.0019 & 0.0019 \\
Heat-Exchanger          & 0.0000 & 0.0000 & 0.0000 \\
PSA                     & 0.0013 & 0.0022 & 0.0030 \\
Batch-Reactor           & 0.0003 & 0.0003 & 0.0005 \\
Distillation            & 0.0081 & 0.0056 & 0.0034 \\
Evaporator              & 0.0015 & 0.0085 & 0.0145 \\
Membrane                & 0.0000 & 0.0001 & 0.0000 \\
Williams-Otto           & 0.0000 & 0.0000 & 0.0000 \\
\bottomrule
\end{tabular}
\end{table}

\begin{table}[htbp]
\centering
\caption{Best EI exploration parameter $\xi$ per problem for Bi-BO (SLSQP), selected by lowest mean final regret across all $n_\text{init}$ values. These are the $\xi$ values used to report results in Table~\ref{tab:final_regret}.}
\label{tab:best_xi}
\begin{tabular}{@{}lr@{}}
\toprule
\textbf{Problem} & \textbf{best $\xi$} \\
\midrule
Small-Feasible-Region   & 0.01  \\
Small-Feasible-Region-2 & 0.001 \\
Rastrigin               & 1.0   \\
Toy-Hydrology           & 0.001 \\
Rosen-Suzuki            & 0.05  \\
\midrule
CSTR                    & 0.001 \\
Heat-Exchanger          & 0.001 \\
PSA                     & 0.001 \\
Batch-Reactor           & 0.001 \\
Distillation            & 0.001 \\
Evaporator              & 0.5   \\
Membrane                & 1.0   \\
Williams-Otto           & 0.001 \\
\bottomrule
\end{tabular}
\end{table}

% xi sensitivity heatmap
\begin{figure}[htbp]
    \centering
    \includegraphics[width=\textwidth]{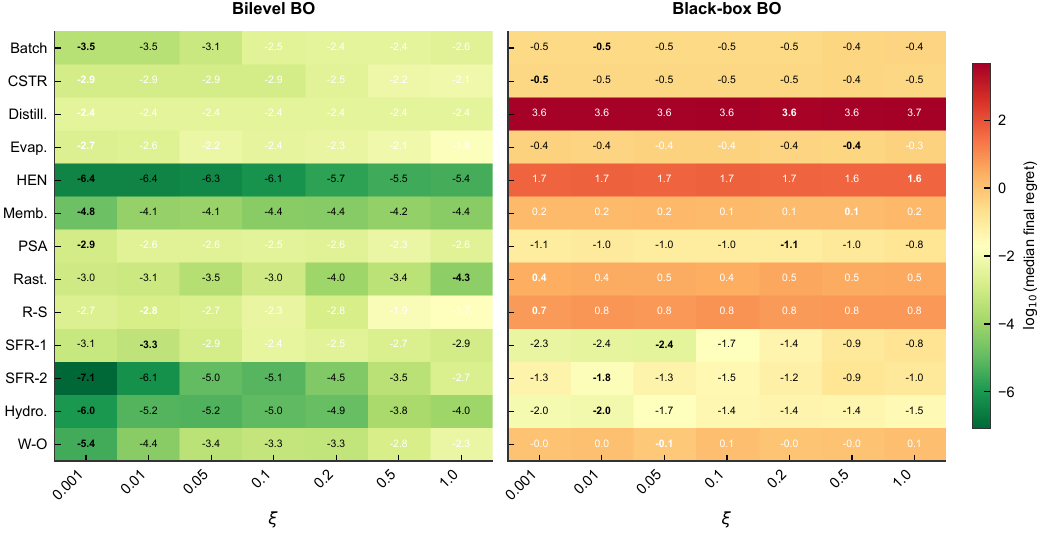}
    \caption{Sensitivity to the EI exploration parameter $\xi$. Left: bilevel BO; right: black-box BO. Rows are problems, columns are $\xi$ values, aggregated over all $n_\text{init}$. Cell values show log$_{10}$(median final regret); bold indicates best $\xi$ per problem. No single $\xi$ is universally optimal, but bilevel BO outperforms BB-BO regardless of $\xi$.}
    \label{fig:xi_heatmap}
\end{figure}

\end{appendices}

%% ============================================================================
%% BIBLIOGRAPHY (inlined per arXiv submission guidelines -- no external .bib file)
%% ============================================================================

\end{document}